\PassOptionsToPackage{numbers, compress}{natbib}
\documentclass{article}

\usepackage[preprint]{neurips_2026}
\usepackage[utf8]{inputenc}
\usepackage[T1]{fontenc}

\usepackage{amsmath}
\usepackage{amssymb}
\usepackage{mathtools}
\usepackage{amsthm}

\usepackage{graphicx}
\usepackage{subcaption}
\usepackage{booktabs}
\usepackage{placeins}

\usepackage{xcolor}
\usepackage{microtype}
\usepackage{hyperref}

\usepackage{algorithm}
\usepackage{algorithmic}
\usepackage[capitalize,noabbrev]{cleveref}

\theoremstyle{plain}
\newtheorem{theorem}{Theorem}[section]

\newtheorem{lemma}[theorem]{Lemma}
\newtheorem{corollary}[theorem]{Corollary}
\theoremstyle{definition}

\newtheorem{assumption}[theorem]{Assumption}
\theoremstyle{remark}

\numberwithin{equation}{section}

\title{CRAFT: Conflict-Resolved Aggregation for \\ Federated Training}

\author{%
Ziqi Wang 
\thanks{
Department of Mathematics, Friedrich-Alexander-Universit\"at Erlangen-N\"urnberg, 
Erlangen, Germany.} \\
\texttt{ziqi.wang@fau.de}
\And
Qiang Liu\thanks{School of Computation, Information and Technology, Technical University of Munich, Garching, Germany.}\\
\texttt{qiang7.liu@tum.de}
\And
Nils Thuerey\footnotemark[2]\hspace{0.5em}\thanks{Munich Center for Machine Learning, Munich, Germany.}\\
\texttt{nils.thuerey@tum.de}
}

\begin{document}

\maketitle

\begin{abstract}
The aggregation of conflicting client updates remains a fundamental bottleneck in federated learning (FL) under heterogeneous data distributions. Naive averaging can produce a global update that improves the global objective while conflicting with specific clients, causing degradation for those clients. In this work, we propose CRAFT (Conflict-Resolved Aggregation for Federated Training), a new aggregation framework that treats the global update as a geometric correction problem. We formulate aggregation as finding the update closest to a reference direction while satisfying conflict-free alignment constraints. We derive a closed-form expression for the constrained optimization problem, avoiding the computational overhead of iterative solvers. Furthermore, we use a layer-wise adaptation to address conflicts at varying feature granularities. We provide a theoretical analysis showing that CRAFT promotes a common-descent structure and mitigates conflicts through its projection geometry. Extensive experiments on heterogeneous benchmarks demonstrate that CRAFT improves the accuracy of the global model while reducing performance disparity across clients compared with state-of-the-art baselines. The source code for CRAFT is available at \url{https://github.com/tum-pbs/CRAFT}.
\end{abstract}

\section{Introduction}
\label{sec:intro}

Federated learning (FL) \citep{mcmahan2017communicationefficient} enables collaborative training of machine learning models across decentralized devices, or clients, without sharing raw private data. Despite its privacy advantages, FL faces a critical challenge from heterogeneity, where client data are typically not independent and identically distributed (non-IID) and are often imbalanced \citep{kairouz2021advances}.

In practice, data heterogeneity induces divergent client updates. Conventional aggregation methods, such as FedAvg \citep{mcmahan2017communicationefficient}, compute weighted averages of local client updates. However, the averaged direction may benefit some clients while conflicting with others, thereby leading to substantial disparities in per-client accuracy \citep{li2019fair}. Consequently, beyond improving the overall mean accuracy, mitigating disparities in per-client performance is also of central importance.

Existing approaches address this challenge through several mechanisms. Heterogeneity-mitigating methods regularize or normalize local training, thereby limiting discrepancies between local and global updates \citep{li2020federated, wang2020fednova}. Meanwhile, fairness-aware methods either modify the federated objective or formulate FL as a multi-objective optimization problem that jointly accounts for client losses, with the goal of improving worst-client performance \citep{hu2023federatedlearningmeetsmultiobjective,mohri2019agnostic}. In addition, personalized FL methods improve client-specific performance by maintaining individualized models for each client \citep{li2021ditto,t2020personalized,zhang2023fedala}. More recently, conflict-aware methods have directly manipulated the geometry of client updates to reduce conflicts through vector projection or geometric conflict resolution \citep{liu2024config,pan2023fedmdfg,wang2021federatedlearningfairaveraging}. However, their performance can still be limited by heuristic conflict-resolution rules.

In this paper, we propose CRAFT (\textbf{C}onflict-\textbf{R}esolved \textbf{A}ggregation for \textbf{F}ederated \textbf{T}raining). Instead of averaging, CRAFT formulates aggregation as a reference-anchored constrained least-squares problem that finds the global update $\mathbf{g}$ closest to a prespecified reference direction while satisfying conflict-free alignment constraints $\langle \mathbf{g}, \mathbf{g}_i \rangle > 0$ for all clients $i$. Our key innovations are as follows.
\begin{itemize}

\item \textbf{Conflict-free aggregation by problem reformulation.}
We reformulate federated aggregation as a reference-anchored constrained least-squares problem rather than a weighted averaging rule. This formulation directly enforces positive alignment between the global update and every participating client update, yielding a conflict-free solution whenever the conflict-free alignment constraints are feasible. We further derive an explicit closed-form solution based on a Moore--Penrose correction, avoiding iterative projection or quadratic-programming solvers and keeping the server-side computation lightweight.

\item \textbf{Momentum-like reference and layer-wise adaptation.}
We use the previous-round global update as the reference direction, allowing CRAFT to preserve useful temporal information while correcting the components that conflict with the current active clients. We further introduce a layer-wise implementation that resolves conflicts at the granularity of model layers, making the method suitable for deep neural networks. The accompanying theoretical analysis characterizes the projection geometry, establishes its common-descent structure, and provides finite-time convergence guarantees.
\item \textbf{Strong empirical performance and plug-in extensibility.}
Extensive experiments on heterogeneous datasets, deep models, and large client populations show that CRAFT improves mean accuracy and, when mean accuracy is comparable, further enhances client-level fairness by reducing accuracy disparity and improving tail-client performance. CRAFT is designed for training a shared global model rather than client-specific personalized models. Nevertheless, it can serve as a generic aggregation operator that replaces weighted averaging in existing personalized FL pipelines, where this simple substitution already yields substantial performance gains and demonstrates the broad extensibility of CRAFT.
\end{itemize}

\begin{figure}[t]
\centering
\begin{subfigure}{0.49\linewidth}
    \centering
    \includegraphics[width=0.95\linewidth]{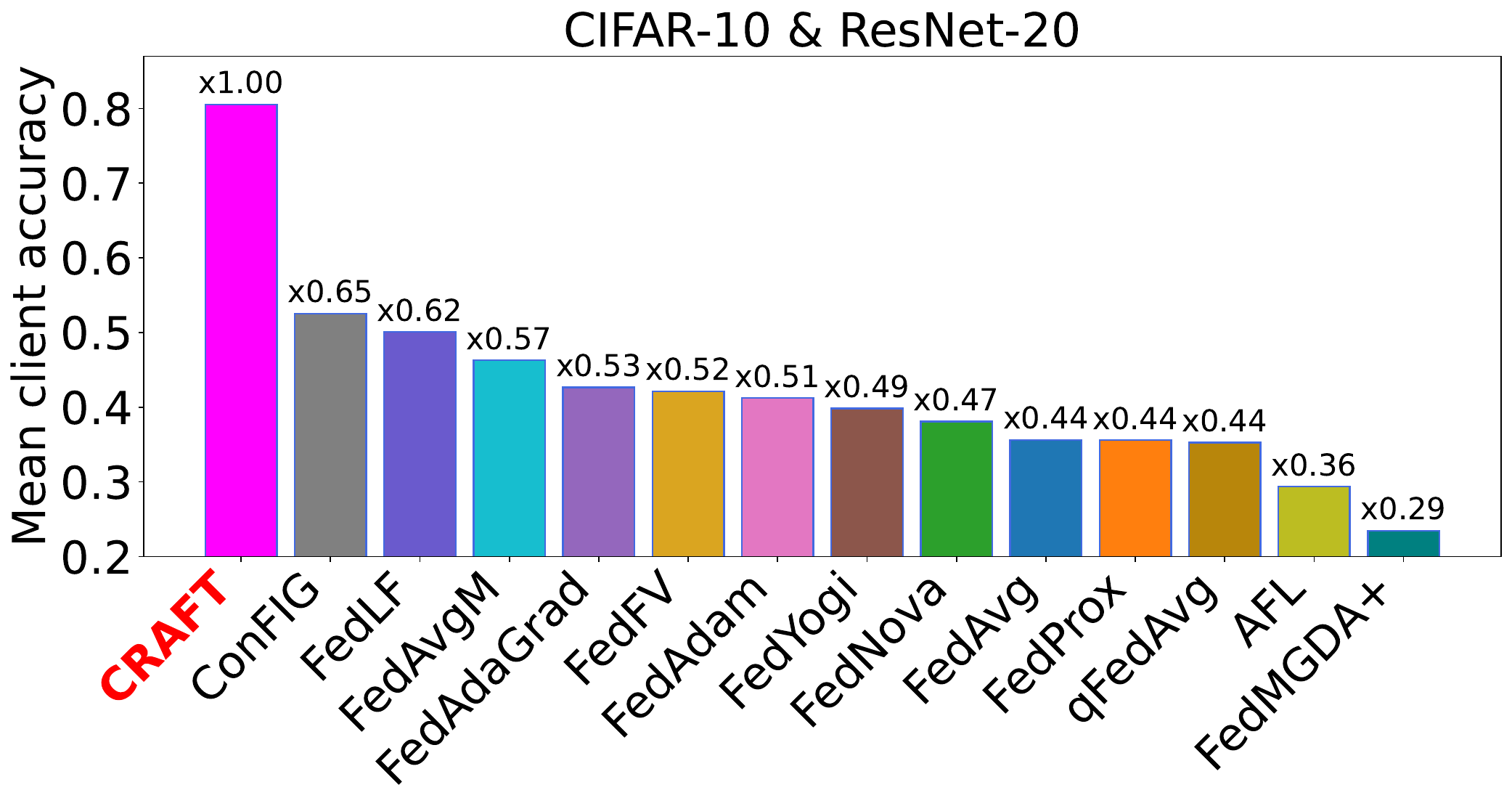}
    \label{fig:teaser_average_accuracy}
\end{subfigure}
\begin{subfigure}{0.49\linewidth}
    \centering
    \includegraphics[width=0.95\linewidth]{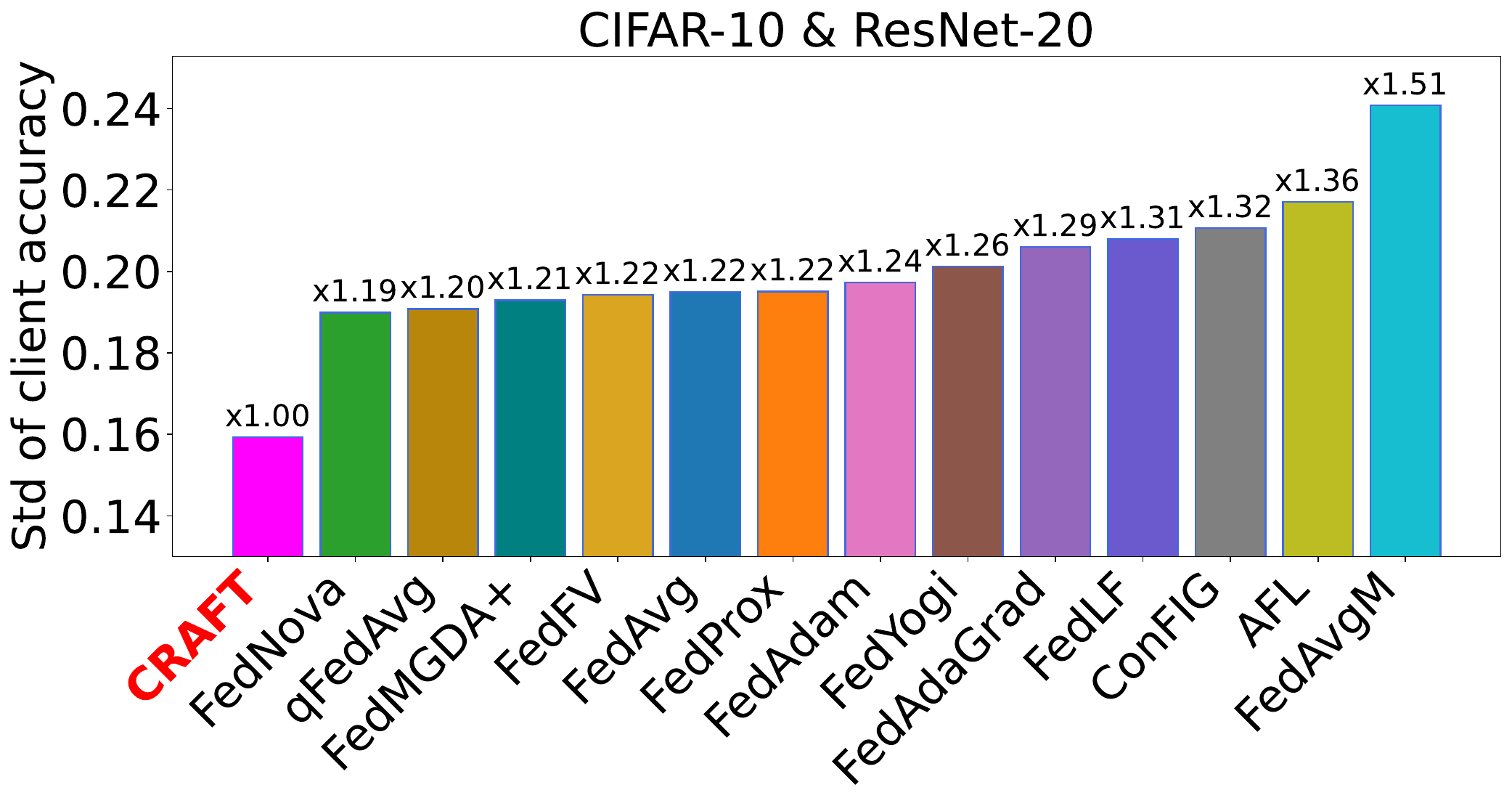}
    \label{fig:teaser_std_accuracy}
\end{subfigure}
\caption{Mean (left, $\uparrow$) and standard deviation (right, $\downarrow$) of per-client accuracy. CRAFT achieves higher mean client accuracy while simultaneously reducing client-level disparity.}
\end{figure}

\section{Related work}
\label{sec:related_work}

\textbf{Heterogeneity and fairness in FL.} Statistical and system heterogeneity are central challenges in FL \citep{kairouz2021advances}. FedAvg often struggles with non-IID data, leading to slow convergence or biased global models. FedProx \citep{li2020federated} proposes regularizing local training to keep updates closer to the global model. FedNova \citep{wang2020fednova} proposes normalizing client updates to address objective inconsistency caused by varying local training epochs. Adaptive server optimizers, such as FedAvgM \citep{hsu2019measuring} and FedOpt \citep{reddi2021adaptive}, have also been explored to improve convergence under heterogeneity \citep{kairouz2021advances}. These methods primarily target global optimization performance, but high mean accuracy does not guarantee reliable accuracy for every client. Client-level fairness aims to reduce performance disparity across clients \citep{li2019fair}. To this end, \citet{mohri2019agnostic} optimizes a worst-case objective via minimax optimization, and \citet{li2019fair} proposes q-FedAvg to reweight the federated objective in favor of lower-performing clients. These approaches address fairness by adjusting the objective function. By contrast, CRAFT tackles the fairness problem directly at the model-update level through conflict-free alignment constraints.

\textbf{Conflict mitigation.} In FL, FedFV \cite{wang2021federatedlearningfairaveraging} identifies gradient conflicts as a key source of unfairness and proposes iteratively removing conflicting components before averaging. FedLF~\cite{pan2024fedlf} and FedMDFG~\cite{pan2023fedmdfg} further develop this direction through layer-wise constraints or fairness-guided gradient adjustment. Similar gradient-conflict issues also arise in multi-task learning (MTL) \cite{yu2020gradient}, continual learning (CL) \cite{Mehrdad2020,Riemer2019}, and physics-informed neural networks (PINNs) \cite{hwang2024dual,liu2024config}. For training PINNs, ConFIG~\cite{liu2024config} formulates conflict resolution as solving a linear system. However, it does not exploit round-wise information to guide the resolution process. To address this limitation, CRAFT formulates conflict resolution as a constrained least-squares problem with a momentum-like reference direction, yielding a more consistent and effective solution.

\textbf{Multi-objective optimization.}
In multi-objective formulations of FL, each client loss is treated as an objective rather than being absorbed into a single global objective. This is useful for fairness because improving one client should not come at the cost of degrading others. AFL~\citep{mohri2019agnostic} follows this idea by solving a minimax problem over model parameters and client weights, where the inner maximization selects the loss weighting that emphasizes clients with larger losses. FedMGDA+~\citep{hu2023federatedlearningmeetsmultiobjective} makes the multi-objective formulation explicit by treating each client loss as a separate objective and computing a minimum-norm convex combination of client gradients to obtain a common descent direction. These methods provide principled fairness-oriented objectives, but the aggregate direction is reconstructed from the current client gradients and may become small or unstable under strong heterogeneity. In contrast, CRAFT anchors aggregation to a momentum-like reference and applies the minimum correction needed to enforce positive alignment with every participating client.

\textbf{Relation to personalized FL.} Personalization methods handle heterogeneity by maintaining client-specific models alongside a shared model. Per-FedAvg~\cite{fallah2020personalized} learns an initial shared model that can be quickly adapted to each client through a few local gradient steps. FedRep~\cite{collins2021exploiting} decomposes the model into a shared backbone and client-specific heads, enabling personalization by locally fine-tuning the client-specific components. Ditto~\citep{li2021ditto}, instead of separating model components, learns an additional personalized model for each client by regularizing it toward the shared global model with a proximal term. CRAFT targets training a single global model, where the server outputs one shared model for all clients. Nevertheless, its conflict-free aggregation mechanism can be readily integrated into existing personalized FL frameworks, as demonstrated in the experiments. Therefore, CRAFT provides a more robust foundation that complements, rather than competes with, personalization strategies.

\section{Problem setup and preliminaries}
\label{sec:problem_setup}
\subsection{Federated optimization}
FL training with $N$ clients can be formulated as minimizing a weighted sum of local objectives:
\begin{equation}
\min_{\boldsymbol{\theta} \in \mathbb{R}^d} 
F(\boldsymbol{\theta}) \coloneqq \sum_{i=1}^N \rho_i f_i(\boldsymbol{\theta}),
\label{eq:global_obj}
\end{equation}
where client $i \in [N] \coloneqq \{1,2,\ldots, N\}$ holds a local dataset $\mathcal{D}_i$ and minimizes $f_i(\boldsymbol{\theta}) \coloneqq \mathbb{E}_{\xi \sim \mathcal{D}_i}[\ell(\boldsymbol{\theta}; \xi)]$, with $\boldsymbol{\theta} \in \mathbb{R}^d$ denoting the model parameters and $\xi$ denoting a random sample. The weights $\rho_i$ are typically chosen as $\rho_i = |\mathcal{D}_i| / \sum_{j=1}^N |\mathcal{D}_j|$.

FL training proceeds in communication rounds. At each round $t = 0, 1, \ldots, T-1$, the server broadcasts the current global model $\boldsymbol{\theta}_t$ to a subset of participating clients $\mathcal{S}_t$ for local training.

\textbf{Client-side.} Each client $i \in \mathcal{S}_t$ initializes its local model as $\boldsymbol{\theta}_i^{t,0} = \boldsymbol{\theta}_t$ and performs $K_i$ steps of local stochastic gradient descent. For $k = 0, \dots, K_i - 1$, the update is given by
\begin{equation}
\boldsymbol{\theta}_i^{t,k+1} 
=
\boldsymbol{\theta}_i^{t,k} 
- 
\eta_l \nabla f_i(\boldsymbol{\theta}_i^{t,k}; \xi_i^{t,k}),
\label{eq:local_sgd}
\end{equation}
where $\eta_l$ is the local learning rate and $\xi_i^{t,k}$ denotes a mini-batch sampled from $\mathcal{D}_i$. After local training, client $i$ sends the following accumulated update to the server for aggregation.
\begin{equation}
\mathbf{g}_i^t \coloneqq \boldsymbol{\theta}_t - \boldsymbol{\theta}_i^{t,K_i}.
\label{eq:client_update}
\end{equation}

\textbf{Server-side.} Upon receiving updates from participating clients, the server aggregates them to form a global update direction $\mathbf{g}_t$ and updates the global model as
\begin{equation}
\boldsymbol{\theta}_{t+1} 
=
\boldsymbol{\theta}_t 
- 
\eta_g \mathbf{g}_t.
\label{eq:server_update}
\end{equation}
Here, $\eta_g$ denotes the server learning rate. In FedAvg, the aggregation is given by
$
\mathbf{g}_t = \sum_{i \in \mathcal{S}_t} \rho_i^t \mathbf{g}_i^t
$, where $\rho_i^t$ denotes the data-proportional weights renormalized over $\mathcal{S}_t$.

\subsection{Conflict definition and fairness issue}
\label{subsec:conflict_definition}

\textbf{Conflict definition.} While the global FL objective \eqref{eq:global_obj} provides a unified optimization target, it does not explicitly account for conflicts between the aggregate update and individual client updates. Let $\mathbf{g}_i^t$ denote the update of client $i$ at round $t$. We say that a \emph{conflict} occurs between the aggregate update $\mathbf{g}_t$ and client $i$ if $\langle \mathbf{g}_i^t, \mathbf{g}_t \rangle < 0$. Since $\mathbf{g}_i^t$ approximates a scaled local gradient, the global update changes the local objective by approximately $-\eta_g\langle \nabla f_i(\boldsymbol{\theta}_t),\mathbf{g}_t\rangle$. Thus, a negative inner product indicates that the global update can increase client $i$'s loss to first order and often reduce its test accuracy.

\textbf{Fairness issue.} Conflicts create a fairness issue, since under data heterogeneity, averaging can improve mean accuracy by favoring clients aligned with the aggregate update while degrading clients in conflicting directions. This can lead to a large standard deviation of per-client accuracy. We therefore treat mean accuracy as a prerequisite, since a small standard deviation is uninformative if all clients remain at low accuracy. Given comparable mean accuracy, better fairness means reducing this standard deviation, \emph{i.e.}, narrowing the accuracy gap between head and tail clients.

\section{The CRAFT framework}
\label{sec:method}

In this section, we introduce the CRAFT framework. We begin by establishing the notation and intuition for conflict-free updates, then present our detailed CRAFT formulation.

\subsection{Conflict-free update}
Our approach is inspired by recent advances in MTL and PINNs, especially the ConFIG algorithm \citep{liu2024config}. Define the normalization operator $\mathcal{U}(\mathbf{v}) := \mathbf{v}/(\|\mathbf{v}\|+\varepsilon)$ with $\varepsilon>0$. Given $m$ active clients, let $\mathbf{U}_t = [\mathbf{u}_1^t, \dots, \mathbf{u}_m^t]^\top \in \mathbb{R}^{m \times d}$ be the normalized update matrix, with $\mathbf{u}_i^t = \mathcal{U}(\mathbf{g}_i^t)$ denoting the normalized local update of client $i$. To ensure positive alignment between the global update $\mathbf{g}$ and all participating client updates, we impose the following inner-product constraints:
\begin{equation}
\mathbf{U}_t \mathbf{g} = \boldsymbol{\rho}_t,
\label{eq:linear_constraints}
\end{equation}
where $\boldsymbol{\rho}_t = [\rho_1^t, \dots, \rho_m^t]^\top$ collects the positive round-$t$ targets for active clients. Since $\rho_i^t>0$ for all $i$, any feasible solution of \eqref{eq:linear_constraints} is conflict-free with respect to all client updates. However, \eqref{eq:linear_constraints} does not uniquely determine the update direction. When $\mathbf{U}_t$ has full row rank, the system is feasible for any target $\boldsymbol{\rho}_t\in\mathbb{R}^m$. In federated learning, it is common for the model dimension $d$ to be much larger than the number of participating clients $m$, \emph{i.e.}, $d \gg m$. Thus, the system is underdetermined and admits infinitely many feasible solutions. We discuss this feasibility condition in Appendix~\ref{app:feasibility_discussion}.

Specifically, ConFIG chooses the feasible direction with the minimum Euclidean norm. Let $\mathbf{A}^{\dagger}$ denote the Moore--Penrose inverse of a matrix $\mathbf{A}$. For a consistent linear system, the pseudoinverse returns the unique minimum-norm solution. Applying this property to \eqref{eq:linear_constraints}, ConFIG returns $\mathbf{g}_{t}=\mathbf{U}_t^\dagger\boldsymbol{\rho}_t$, which is equivalently the minimizer of
\begin{equation}
\min_{\mathbf{g} \in \mathbb{R}^d} \; \frac{1}{2}\|\mathbf{g}\|^2 
\quad \text{s.t.} \quad 
\mathbf{U}_t \mathbf{g} = \boldsymbol{\rho}_t.
\label{eq:config_opt}
\end{equation}
This construction produces a conflict-free direction, but it does not take historical information into account. In FL, consecutive communication rounds are coupled through the evolving global model, and the previous aggregate direction carries useful information about the server's optimization trajectory. Reconstructing the update from scratch at every round can discard this historical signal, attenuate the update norm, and introduce unnecessary variation under partial participation. This motivates a different selection principle that maintains the same conflict-free alignment constraints while selecting the feasible update closest to a stable reference direction.

\subsection{Improved conflict resolution via correction}
Building on the above observation, CRAFT formulates aggregation as a \emph{correction} problem. Instead of constructing an update direction from the origin, we seek an update $\mathbf{g}$ that remains as close as possible to a reference direction $\hat{\mathbf{g}}_t$ while satisfying the same conflict-free alignment constraints.
\begin{equation}
\min_{\mathbf{g} \in \mathbb{R}^d} \frac{1}{2} \| \mathbf{g} - \hat{\mathbf{g}}_t \|^2 
\quad \text{s.t.} \quad 
\mathbf{U}_t \mathbf{g} = \boldsymbol{\rho}_t.
\label{eq:craft_opt}
\end{equation}
Let $\boldsymbol{\Delta} \coloneqq \mathbf{g}-\hat{\mathbf{g}}_t$ denote the \emph{correction} applied to the reference direction. Then \eqref{eq:craft_opt} is equivalently written as the following minimum-norm correction problem.
\begin{equation}
\min_{\boldsymbol{\Delta}\in\mathbb{R}^d}
\frac{1}{2}\|\boldsymbol{\Delta}\|^2
\quad
\text{s.t.}
\quad
\mathbf{U}_t\boldsymbol{\Delta}
=
\boldsymbol{\rho}_t-\mathbf{U}_t\hat{\mathbf{g}}_t.
\label{eq:craft_correction_opt}
\end{equation}
This is the same pseudoinverse principle applied to the correction variable. When the correction constraint in \eqref{eq:craft_correction_opt} is consistent, the resulting CRAFT update is
\begin{equation}
\mathbf{g}_t
=
\hat{\mathbf{g}}_t+\boldsymbol{\Delta}_t,
\quad \text{with} \quad
\boldsymbol{\Delta}_t
=
\mathbf{U}_t^\dagger
\bigl(\boldsymbol{\rho}_t-\mathbf{U}_t\hat{\mathbf{g}}_t\bigr).
\label{eq:craft_solution}
\end{equation}
Thus, CRAFT keeps the reference direction whenever it already satisfies the desired alignments and otherwise adds only the smallest Euclidean correction needed to enforce them. Since this correction lies in the row space of $\mathbf{U}_t$, the update modifies $\hat{\mathbf{g}}_t$ only along directions spanned by the participating client gradients, yielding the closest conflict-free update to the reference direction.

\subsection{Choice of reference direction}
We propose using the normalized \emph{global update} from the previous round as the reference direction.
\begin{equation}
\hat{\mathbf{g}}_t
=
\mathcal{U}(\mathbf{g}_{t-1}).
\label{eq:ref_update}
\end{equation}
At the first round, we set $\hat{\mathbf{g}}_0=\mathbf{0}$, which makes CRAFT reduce to the zero-prior minimum-norm initialization. For later rounds, the reference direction preserves temporal consistency, while the correction step guarantees that the final update is adapted to the current participating clients. Thus, the global model maintains a persistent descent direction that is continuously refined by newly sampled clients. This is particularly valuable when the sampled clients in a round are not representative, as the reference can stabilize the global update against local sampling noise.

Meanwhile, unlike momentum methods~\cite{hsu2019measuring,nesterov1983method} that add historical updates directly to the update rule, CRAFT uses history to define a geometric orientation. Rather than simply adding the previous vector, CRAFT searches for the new direction closest to it while satisfying the current conflict-free alignment constraints. See Appendix~\ref{app:momentum_comparison} for a detailed comparison between CRAFT and momentum methods.

\subsection{Layer-wise adaptation}
Global conflict resolution treats the entire model parameter vector $\boldsymbol{\theta}$ as a single entity. However, deep neural networks exhibit layer-specific optimization dynamics. Conflicts in shallow layers may signal input distribution shifts, while conflicts in deeper layers may stem from label skew.

To address this, we use a \emph{layer-wise} adaptation. We decompose the problem into $Q$ independent subproblems. For each layer $q \in [Q]$, we construct the normalized update matrix $\mathbf{U}_t^q$ and reference direction $\hat{\mathbf{g}}_t^q$. We then solve the correction problem separately for each layer as
\begin{equation}
\mathbf{g}_t^q
=
\hat{\mathbf{g}}_t^q+\boldsymbol{\Delta}_t^q,
\quad \text{with} \quad
\boldsymbol{\Delta}_t^q
=
\bigl(\mathbf{U}_t^q\bigr)^\dagger
\bigl(\boldsymbol{\rho}_t-\mathbf{U}_t^q\hat{\mathbf{g}}_t^q\bigr).
\label{eq:layerwise_craft_solution}
\end{equation}
This layer-wise approach allows for more flexible conflict resolution, preventing severe conflicts in one layer from suppressing useful components in another. The complete procedure of CRAFT, incorporating this layer-wise strategy, is summarized in Algorithm~\ref{alg:craft}.

\begin{algorithm}[t]
\caption{CRAFT framework}
\label{alg:craft}
\begin{algorithmic}[1]
    \STATE {\bfseries Input} $\boldsymbol{\theta}_0$, $T$, $\eta_g$, $\eta_l$, $\{K_i\}_{i=1}^N$, $\{\rho_i\}_{i=1}^N$
    \FOR{$t=0$ {\bfseries to} $T-1$}
    \STATE Server samples client set $\mathcal{S}_t$ and broadcasts $\boldsymbol{\theta}_t$
    \hfill $\triangleright$ \textit{current global model}
    \FOR{each client $i \in \mathcal{S}_t$ \textbf{in parallel}}
    \STATE Client sets $\boldsymbol{\theta}_i^{t,0} \leftarrow \boldsymbol{\theta}_t$
    \FOR{$k=0$ {\bfseries to} $K_i-1$}
    \STATE $\boldsymbol{\theta}_i^{t,k+1} \leftarrow \boldsymbol{\theta}_i^{t,k} - \eta_l \nabla f_i(\boldsymbol{\theta}_i^{t,k};\xi_i^{t,k})$ 
    \hfill $\triangleright$ \textit{local model training, see \eqref{eq:local_sgd}}
    \ENDFOR
    \STATE Client uploads $\mathbf{g}_i^t \leftarrow \boldsymbol{\theta}_t - \boldsymbol{\theta}_i^{t,K_i}$ to the server
    \hfill $\triangleright$ \textit{client update, see \eqref{eq:client_update}}
    \ENDFOR
    
    \STATE Server forms $\boldsymbol{\rho}_t \leftarrow [\,\rho_i / \sum_{j\in\mathcal{S}_t}\rho_j\,]_{i\in\mathcal{S}_t}$
    \hfill $\triangleright$ \textit{active-client weights}
    \FOR{each layer $q$}
    \STATE Server constructs $\mathbf{U}_t^q \leftarrow [\,\mathcal{U}(\mathbf{g}_i^{t,q})\,]_{i\in\mathcal{S}_t}^{\top}$
    \hfill $\triangleright$ \textit{normalized local updates}
    \STATE $\hat{\mathbf{g}}_t^q \leftarrow \mathbf{0}$ if $t=0$, else $\mathcal{U}(\mathbf{g}_{t-1}^q)$
    \hfill $\triangleright$ \textit{normalized temporal reference}
    \STATE $\mathbf{g}_t^q \leftarrow \hat{\mathbf{g}}_t^q + \bigl(\mathbf{U}_t^q\bigr)^\dagger \bigl(\boldsymbol{\rho}_t - \mathbf{U}_t^q \hat{\mathbf{g}}_t^q\bigr)$
    \hfill $\triangleright$ \textit{layer-wise projection, see \eqref{eq:layerwise_craft_solution}}
    \ENDFOR
    \STATE Server sets $\mathbf{g}_t \leftarrow \mathrm{concat}_q(\mathbf{g}_t^q)$ 
    \hfill $\triangleright$ \textit{global update concatenation}
    \STATE Server updates $\boldsymbol{\theta}_{t+1} \leftarrow \boldsymbol{\theta}_t - \eta_g \mathbf{g}_t$ 
    \hfill $\triangleright$ \textit{global model update, see \eqref{eq:server_update}}
    \ENDFOR
\end{algorithmic}
\end{algorithm}

\subsection{Communication, memory, and computational cost}
\label{subsec:cost_discussion}

\noindent\textbf{No additional communication.} Each active client uploads the same $d$-dimensional model update as in FedAvg, and all conflict-resolution operations are performed on the server.

\noindent\textbf{Server-side memory cost.} For $m$ active clients, the server stores the current updates in $\mathcal{O}(dm)$ memory and the reference direction in $\mathcal{O}(d)$ memory. The layer-wise projection only adds $\mathcal{O}(m^2)$ reusable temporary workspace for the pseudoinverse computation.

\noindent\textbf{Server-side computational complexity.} For a neural network with $Q$ layers and layer dimensions $\{d_q\}_{q=1}^Q$, where $\sum_{q} d_q=d$, forming the intermediate matrices needed for the layer-wise pseudoinverse computations costs $\mathcal{O}(dm^2)$. Solving the $Q$ corresponding pseudoinverse problems costs $\mathcal{O}(Qm^3)$. Thus, the conservative per-round server complexity is $\mathcal{O}(dm^2+Qm^3)$, typically dominated by $\mathcal{O}(dm^2)$ when $d\gg m$. Meanwhile, the layer-wise computations are naturally parallelizable.

\subsection{Theoretical analysis}
\label{sec:theory}
We provide a formal analysis of the feasibility of the conflict-free alignment constraints, the common-descent structure, and the finite-time convergence behavior of CRAFT in Appendix~\ref{app:convergence}. Here, we briefly summarize the main convergence result. For exposition, the main theorem is stated under full client participation with local gradient-descent training. Let $K_{\max}$ denote the maximum number of local steps across clients. Under this setting, Theorem~\ref{thm:full_multi_step_rate} yields
\begin{equation}
\frac1T\sum_{t=0}^{T-1}\|\nabla F(\boldsymbol{\theta}_t)\|^2
=
\mathcal{O}\left(\frac{1}{\sqrt{T}}\right)
+
\mathcal{O}\bigl(\eta_l(K_{\max}-1)\bigr)    
\end{equation}
under the standard choice $\eta_g=\Theta(1/\sqrt{T})$. The first term is the usual non-convex optimization rate, and the second is the deterministic bias induced by multiple local steps, which vanishes when $\eta_l$ is also chosen to decay with $T$. Notably, the analysis does not rely on assumptions about the underlying data distributions and accommodates heterogeneous local update steps $K_i$ across clients.

\section{Experiments}
\label{sec:experiments}
In this section, we evaluate whether CRAFT improves the overall training performance of FL, whether the resulting gains are shared across both head and tail clients, and whether CRAFT can strengthen existing personalized FL methods when used as a direct replacement for weighted averaging.

\subsection{Experimental setup}
\vspace{-2pt}
\label{sec_setups}

\textbf{Datasets, models, and participation.} We use the FEMNIST~\citep{caldas2018leaf} and CIFAR-10/100~\citep{krizhevsky2009learning} datasets. For FEMNIST, we train a multi-layer perceptron (MLP). For CIFAR-10/100, we use convolutional neural networks (CNNs) and residual networks (ResNet-20/56/110) \citep{he2016deep}. FEMNIST uses 1000 clients with 10 active clients per round. For CIFAR, ResNet models use 1000 clients with 100 active clients per round, while CNN models use 100 clients with 10 active clients per round. We simulate non-IID data using highly heterogeneous Dirichlet partitioning \citep{hsu2019measuring} with a concentration parameter $\alpha=0.1$.

\textbf{Baselines.} We compare CRAFT with representative methods covering simple averaging (FedAvg \citep{mcmahan2017communicationefficient}), drift mitigation (FedProx \citep{li2020federated}, FedNova \citep{wang2020fednova}), server momentum/adaptive optimization (FedAvgM \citep{hsu2019measuring}, FedAdam/AdaGrad/Yogi \citep{reddi2021adaptive}), fairness-aware objectives (AFL \citep{mohri2019agnostic}, $q$-FedAvg \citep{li2019fair}, FedMGDA+ \citep{hu2023federatedlearningmeetsmultiobjective}), and conflict-aware aggregation (FedFV \citep{wang2021federatedlearningfairaveraging}, FedLF \citep{pan2024fedlf}, ConFIG \citep{liu2024config}).

Detailed hyperparameter grids and supplementary results are provided in Appendix~\ref{app:hyperparameters}.

\subsection{Overall performance}
\label{subsec:overall_performance}

Table~\ref{tab:main_results_1} reports test accuracy metrics for FEMNIST and CIFAR-10, with additional results provided in Appendix~\ref{app:hyperparameters}. CRAFT consistently improves mean accuracy, with relative gains of $7.8\%$ over the second-best FedAvgM on FEMNIST and $53.5\%$ over ConFIG on CIFAR-10. Importantly, the improvement is not driven solely by clients that already achieve high accuracy. CRAFT also raises both the best-$10\%$ and worst-$10\%$ client accuracies, indicating simultaneous gains for head and tail users. The comparison with FedAvgM shows that CRAFT's reference-anchored projection provides more effective guidance than momentum smoothing alone, while the comparison with ConFIG shows that using an informative reference is substantially stronger than projecting toward a zero reference.

Figure~\ref{fig:convergence} further complements Table~\ref{tab:main_results_1} by plotting convergence curves on more datasets and deeper model architectures. CRAFT's accuracy rises faster and remains higher across these settings. The advantage is especially stable and pronounced on deeper networks, where several baselines either saturate early or improve slowly under client drift. This behavior supports the central design premise of CRAFT, namely that enforcing client-wise alignment during aggregation makes each server update better able to mitigate conflicts, thereby improving both final accuracy and training stability.

\begin{table*}[htbp]
\centering
\caption{Comparative evaluation of test accuracy metrics, including Mean accuracy, Best 10\% (mean accuracy of top-10\% clients), Worst 10\% (mean accuracy of bottom-10\% clients), and Standard Deviation (Std). \textbf{Bold} and \underline{underlined} values indicate the best and second-best results, respectively.}
\label{tab:main_results_1}
\resizebox{\textwidth}{!}{
    \begin{tabular}{lcccccccc}
        \toprule
        & \multicolumn{4}{c}{FEMNIST \& MLP} & \multicolumn{4}{c}{CIFAR-10 \& ResNet-20} \\
        \cmidrule(lr){2-5} \cmidrule(lr){6-9}
        Algorithm & Mean ($\uparrow$) & Best ($\uparrow$) & Worst ($\uparrow$) & Std ($\downarrow$) & Mean ($\uparrow$) & Best ($\uparrow$) & Worst ($\uparrow$) & Std ($\downarrow$) \\
        \midrule
        FedAvg & 0.570 & 0.776 & 0.305 & 0.136 & 0.356 & 0.727 & 0.015 & 0.195 \\
        FedProx & 0.565 & 0.770 & 0.300 & 0.135 & 0.356 & 0.728 & 0.015 & 0.195 \\
        FedNova & 0.589 & 0.804 & 0.337 & 0.135 & 0.381 & 0.734 & 0.040 & \underline{0.190} \\
        FedAvgM & \underline{0.655} & \underline{0.843} & \underline{0.408} & 0.125 & 0.463 & 0.871 & 0.040 & 0.241 \\
        FedAdaGrad & 0.645 & 0.828 & 0.405 & \underline{0.121} & 0.427 & 0.799 & 0.072 & 0.206 \\
        FedAdam & 0.638 & 0.825 & 0.390 & 0.125 & 0.412 & 0.790 & 0.065 & 0.197 \\
        FedYogi & 0.639 & 0.830 & 0.381 & 0.130 & 0.398 & 0.777 & 0.047 & 0.201 \\
        AFL & 0.548 & 0.774 & 0.301 & 0.137 & 0.294 & 0.732 & 0.000 & 0.217 \\
        FedFV & 0.489 & 0.742 & 0.246 & 0.144 & 0.421 & 0.780 & 0.078 & 0.194 \\
        FedMGDA+ & 0.386 & 0.658 & 0.133 & 0.166 & 0.235 & 0.612 & 0.000 & 0.193 \\
        qFedAvg & 0.555 & 0.774 & 0.298 & 0.137 & 0.352 & 0.716 & 0.019 & 0.191 \\
        FedLF & 0.426 & 0.688 & 0.177 & 0.154 & 0.501 & 0.845 & \underline{0.149} & 0.208 \\
        ConFIG & 0.600 & 0.801 & 0.333 & 0.136 & \underline{0.525} & \underline{0.872} & 0.146 & 0.211 \\
        \midrule
        \textbf{CRAFT} & \textbf{0.706} & \textbf{0.878} & \textbf{0.474} & \textbf{0.115} & \textbf{0.806} & \textbf{1.000} & \textbf{0.479} & \textbf{0.159} \\
        \bottomrule
    \end{tabular}
}
\end{table*}

\begin{figure}[ht]
\centering
\includegraphics[width=0.97\linewidth]{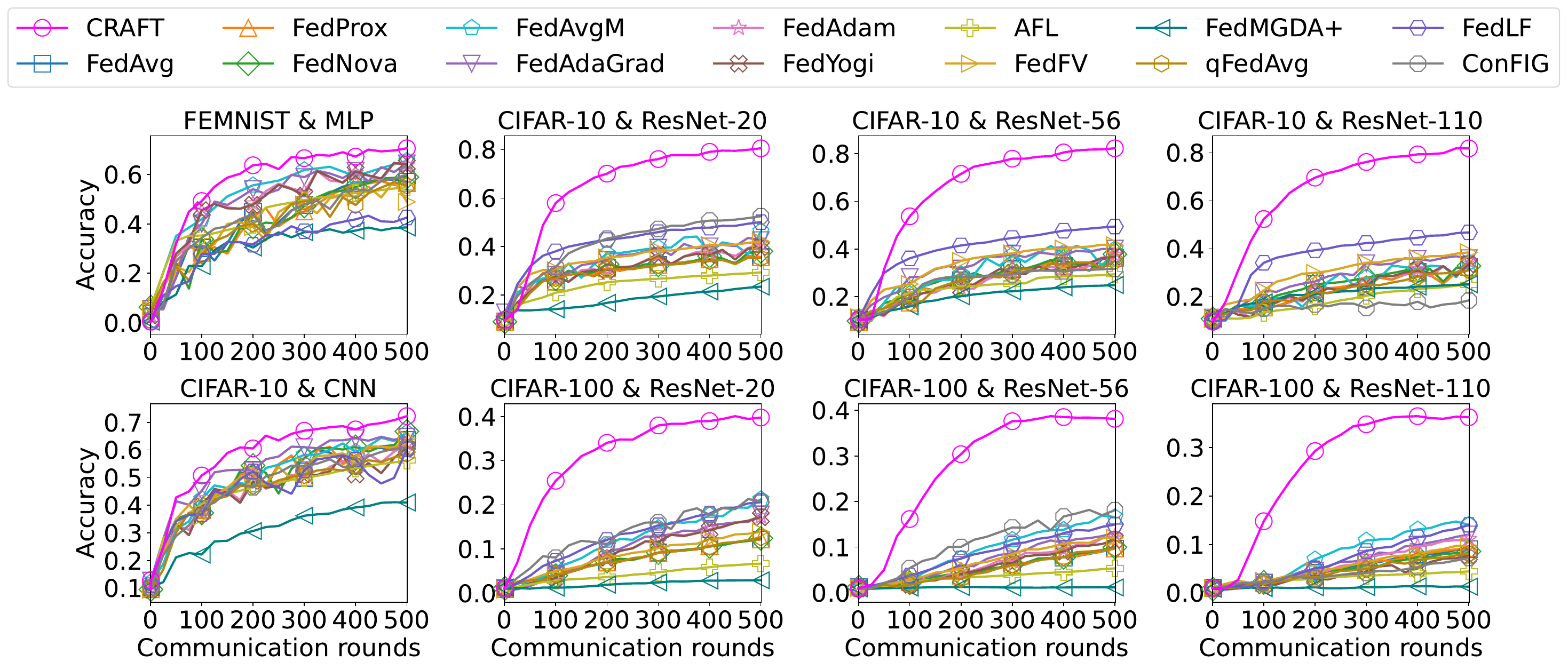}
\caption{Evolution of mean test accuracy over communication rounds across datasets and model architectures. CRAFT consistently converges faster and reaches higher final accuracy, with especially stable gains on more challenging datasets and deeper neural networks.}
\label{fig:convergence}
\end{figure}

\subsection{Per-client accuracy and fairness}
\label{subsec:fairness}

We next evaluate the full distribution of per-client accuracy. Mean accuracy alone can hide severe disparity because two global models with the same mean may behave very differently across clients.

The results in Table~\ref{tab:main_results_1} show that CRAFT improves mean accuracy without sacrificing either end of the client distribution. On FEMNIST, relative to the second-best FedAvgM, CRAFT improves best-10\% and worst-10\% accuracy by $4.2\%$ and $16.2\%$, respectively. On CIFAR-10 with ResNet-20, relative to the corresponding second-best methods, CRAFT improves best-10\% accuracy by $14.7\%$ over ConFIG and worst-10\% accuracy by $221.5\%$ over FedLF. These gains indicate that CRAFT does not simply trade tail users for head users or vice versa. This is also reflected in the standard deviation metric shown in Figure~\ref{fig:fairness}, where CRAFT achieves the lowest standard deviation on deeper networks, providing a tighter distribution of final client accuracy.

Figure~\ref{fig:client_accuracy_distribution} makes this distributional change explicit by plotting the final test accuracy of all clients under FedAvg and CRAFT\@. The x-axis is test accuracy and the y-axis counts clients, so a distribution shifted to the right means that more clients achieve higher accuracy. On FEMNIST and CIFAR-10, CRAFT moves most of the mass to the right and reduces the low-accuracy tail. On CIFAR-100, the FedAvg distribution can appear concentrated, but this concentration is misleading, as most clients remain near zero accuracy. CRAFT moves the distribution away from this near-zero regime substantially.

Moreover, Figure~\ref{fig:client_radar} provides a complementary summary of mean, best-10\%, and worst-10\% accuracy. In this radar plot, each spoke corresponds to an algorithm, and the three curves correspond to best-10\%, mean, and worst-10\% accuracy, all of which are better when farther from the center. Several baselines obtain reasonable mean accuracy while showing a large gap between the best and worst clients, which reflects a polarized client distribution. CRAFT keeps the mean and best-client performance high while lifting the worst-client curve, showing a much fairer distribution.

\begin{figure}[ht]
\centering
\includegraphics[width=0.95\linewidth]{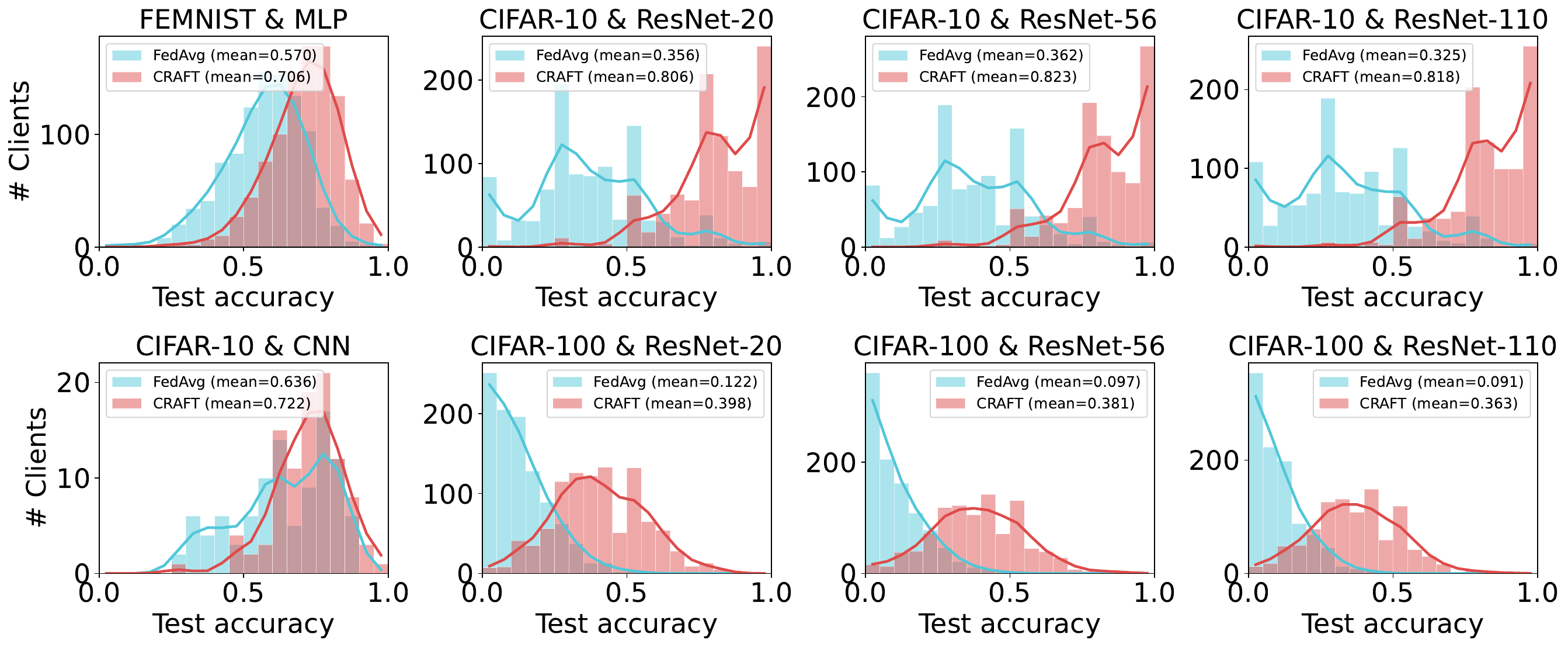}
\caption{Distribution of per-client test accuracy. A rightward shift indicates higher accuracy for more clients, and CRAFT produces a more favorable distribution with improved mean and tail performance.}
\label{fig:client_accuracy_distribution}
\end{figure}

\begin{figure}[ht]
\centering
\includegraphics[width=0.95\linewidth]{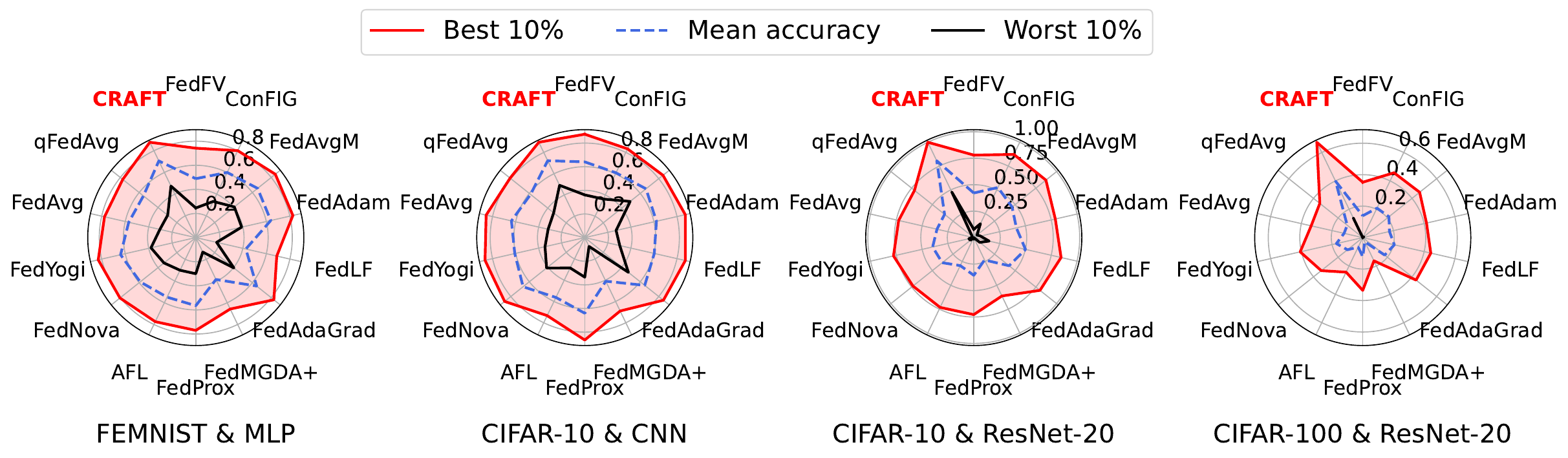}
\caption{Radar plot of accuracy metrics, with each spoke corresponding to a method. CRAFT improves the worst-client performance while keeping the mean and best performance high.}
\label{fig:client_radar}
\end{figure}

\begin{figure}[ht]
\centering 
\begin{subfigure}{0.49\linewidth}
    \centering
    \includegraphics[width=0.95\linewidth]{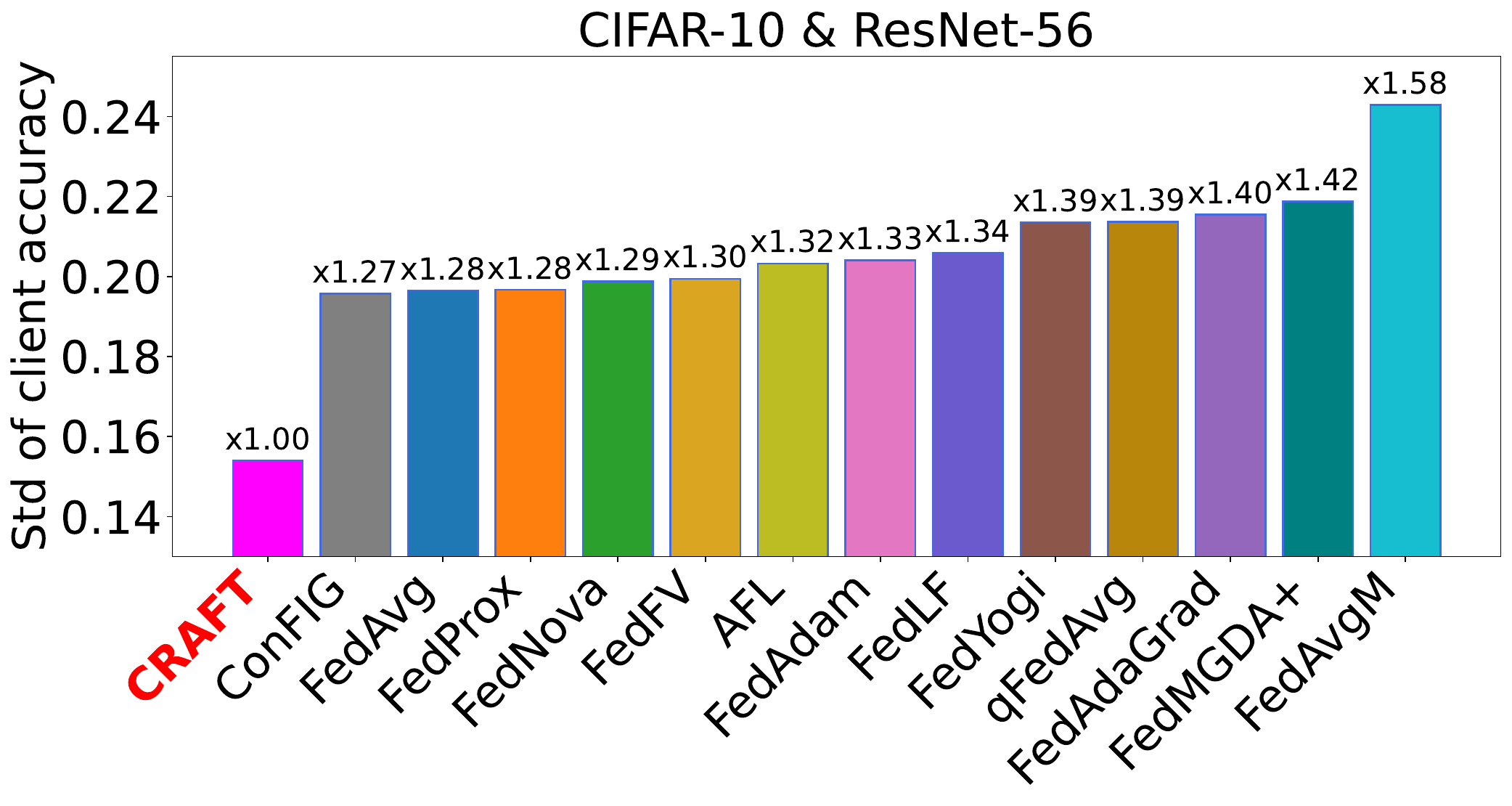}
    \label{fig:fairness_resnet56}
\end{subfigure}
\begin{subfigure}{0.49\linewidth}
    \centering
    \includegraphics[width=0.95\linewidth]{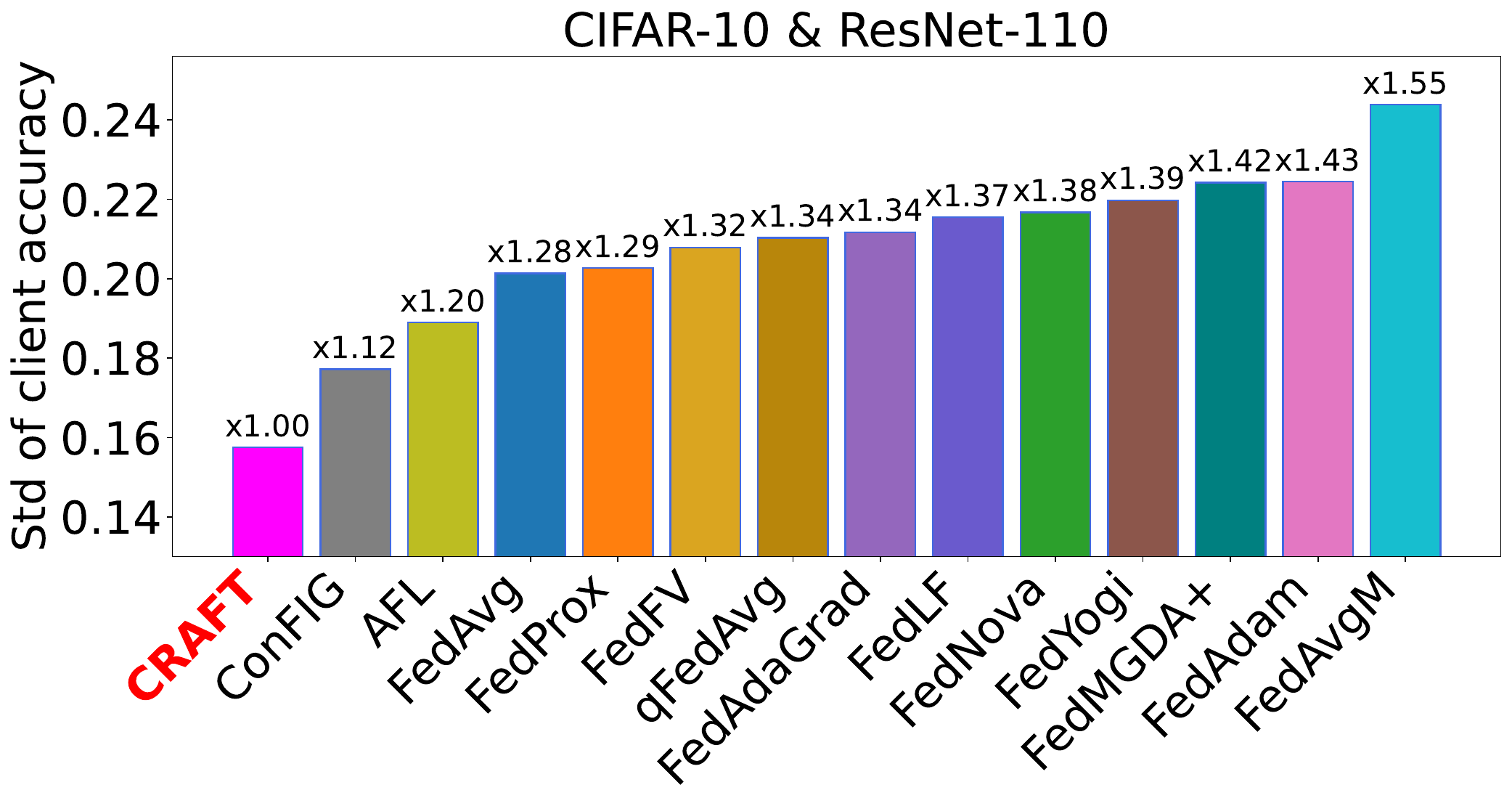}
    \label{fig:fairness_resnet110}
\end{subfigure}
\caption{Client-level fairness with deeper ResNets, measured by the standard deviation of per-client accuracy. Lower values indicate a tighter distribution and therefore better fairness.}
\label{fig:fairness}
\end{figure}

\subsection{Integration with personalized federated learning}
\label{subsec:personalized_integration}

CRAFT provides a new aggregation operator rather than a personalized FL algorithm that maintains a separate model for each client. Nevertheless, personalized FL methods such as Ditto still require an aggregation step to update a shared model or backbone. We therefore test whether CRAFT can improve a representative personalized FL pipeline by replacing the averaging step inside Ditto~\citep[Alg.\@ 2, line 8]{li2021ditto} with the CRAFT operator. This keeps Ditto's client-side personalization mechanism unchanged and modifies only the aggregation rule used for the shared update.

Figure~\ref{fig:app_ditto_craft} reports heatmaps over different server and client learning-rate combinations. Compared with vanilla Ditto, Ditto with CRAFT improves mean test accuracy across the grid. The improvement is especially clear at larger learning rates, where the best CIFAR-10 accuracy increases by $19.7\%$ from $0.644$ to $0.771$ and the best CIFAR-100 accuracy increases by $98.8\%$ from $0.245$ to $0.487$. These results show that CRAFT has broad plug-in extensibility. It can strengthen personalized FL methods by improving their internal aggregation step without changing their personalization objective.

\begin{figure}[ht]
\centering
\begin{subfigure}{0.49\linewidth}
    \centering
    \includegraphics[width=0.95\linewidth]{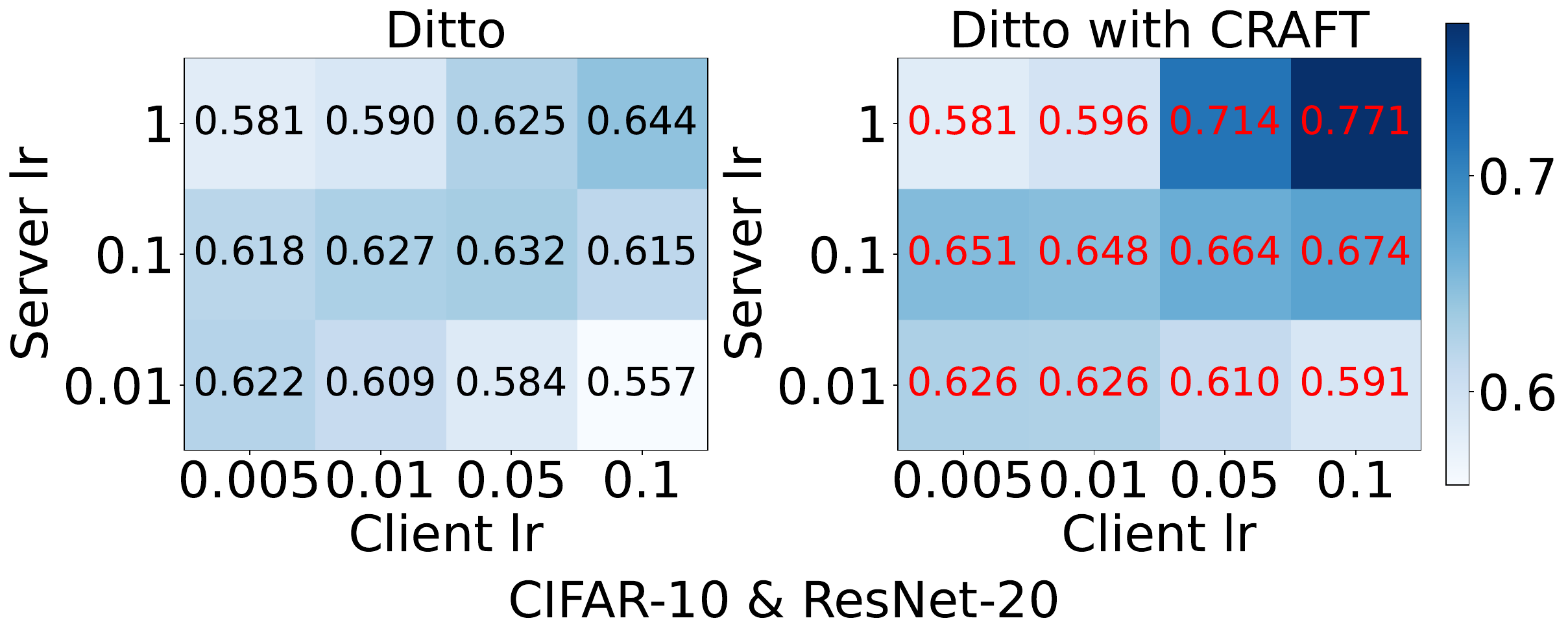}
    \label{fig:app_ditto_craft_left}
\end{subfigure}
\begin{subfigure}{0.49\linewidth}
    \centering
    \includegraphics[width=0.95\linewidth]{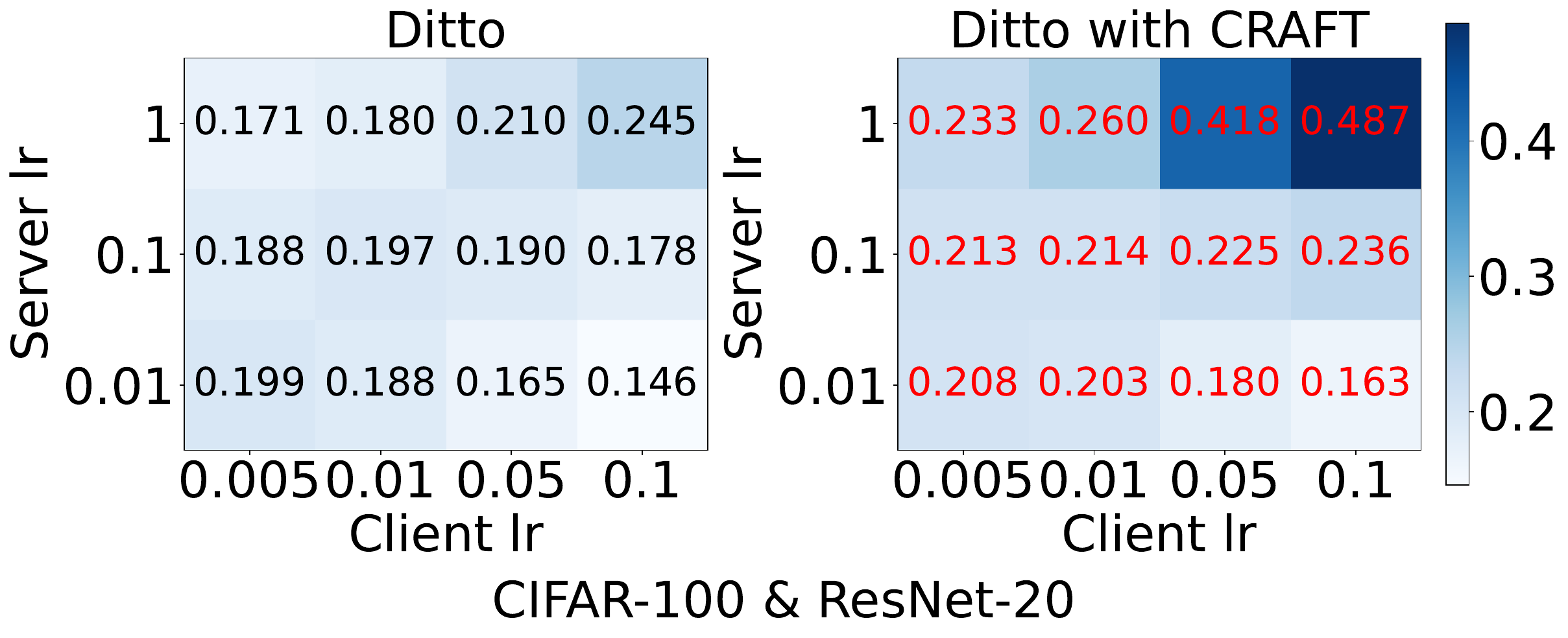}
    \label{fig:app_ditto_craft_right}
\end{subfigure}
\caption{Comparison between vanilla Ditto and Ditto with CRAFT across various settings. Replacing Ditto's average aggregation with the CRAFT operator yields average relative improvements of $6.0\%$ on CIFAR-10 and $31.9\%$ on CIFAR-100, demonstrating CRAFT's strong extensibility.}
\label{fig:app_ditto_craft}
\end{figure}

\section{Conclusions and limitations}
\label{sec:limitations}
This work presented CRAFT, a conflict-resolved aggregation framework that treats federated aggregation as a geometric correction problem. By enforcing non-negative alignment between the global update and each participating client update, CRAFT mitigates conflicts in heterogeneous FL\@. Our analysis identifies the resulting common-descent structure and establishes convergence guarantees. Experiments show that CRAFT improves mean accuracy while maintaining strong performance for both head and tail clients. CRAFT also transfers directly to personalized FL as a drop-in aggregation module, improving personalization without modifying local adaptation. Overall, CRAFT provides a new perspective on aggregation design and a practical tool for improving FL under heterogeneity.

Meanwhile, several limitations remain. First, CRAFT uses the previous global update as the reference direction, which is simple and empirically effective. Alternatives such as moving averages over multiple previous rounds or adaptive references based on recent client conflicts may provide smoother or more informative guidance. Second, the alignment target is currently chosen according to clients' data proportions. More adaptive targets could reweight clients based on training progress, loss, gradient conflicts, or fairness requirements. Understanding how these reference and alignment choices affect accuracy, stability, and client-level fairness is an important direction for future work.

\section*{Acknowledgments}
Ziqi Wang acknowledges funding from the European Union's Horizon Europe MSCA project ModConFlex (grant number 101073558). Qiang Liu acknowledges support from the China Scholarship Council (No. 202206120036). We are grateful to Enrique Zuazua for invaluable discussions. We acknowledge the scientific support and HPC resources provided by the Erlangen National High Performance Computing Center (NHR@FAU) of the Friedrich-Alexander-Universität Erlangen-Nürnberg (FAU). The hardware is partially funded by the German Research Foundation (DFG).

\vspace{-4pt}
\bibliography{references}
\bibliographystyle{plainnat}

\newpage
\appendix
\onecolumn

{\Large\bfseries Appendix}

This appendix provides details about CRAFT on the following topics:
\begin{itemize}
\item Appendix~\ref{app:convergence}: Technical details of the theoretical analysis.
\item Appendix~\ref{app:additional_discussion}: Additional discussion and design choices.
\item Appendix~\ref{app:hyperparameters}: Training details and hyperparameter settings.
\item Appendix~\ref{app:additional_results}: Additional numerical results and visualizations.
\end{itemize}

\section{Technical details of the theoretical analysis}
\label{app:convergence}
In this appendix, we first discuss the feasibility of the conflict-free alignment constraints, then prove the full-vector convergence bound, and finally analyze the layer-wise version and residual alignment constraints as extensions.

\subsection{Feasibility and solution existence}
\label{app:feasibility_discussion}

It follows from \eqref{eq:craft_opt} that CRAFT's update satisfies the linear system $\mathbf{U}_t \mathbf{g} = \boldsymbol{\rho}_t$ with $\mathbf{U}_t\in\mathbb{R}^{m\times d}$. The system is feasible if and only if $\boldsymbol{\rho}_t\in\operatorname{Range}(\mathbf{U}_t)$, equivalently $\operatorname{rank}(\mathbf{U}_t)= \operatorname{rank}([\mathbf{U}_t\mid\boldsymbol{\rho}_t])$. In the exact statements in this appendix, $\mathbf{U}_t$ denotes the unit-normalized matrix with rows $\mathbf{u}_i^t=\mathbf{g}_i^t/\|\mathbf{g}_i^t\|$ for nonzero uploaded updates. Stabilized normalization with $\varepsilon>0$ and the handling of near-zero updates are treated as numerical variants in Appendix~\ref{app:numerical_active_set}.

In deep neural networks, the parameter dimension $d$ is typically orders of magnitude larger than the number of sampled clients $m$. Moreover, under non-IID data distributions, the uploaded local directions arise from distinct local optimization landscapes. An exact linear dependence among the $m$ active-client directions in $\mathbb{R}^d$ is therefore a degenerate event. Thus, in the generic high-dimensional case, $\operatorname{rank}(\mathbf{U}_t)=m$. Then $\operatorname{Range}(\mathbf{U}_t)=\mathbb{R}^m$, so every target $\boldsymbol{\rho}_t\in\mathbb{R}^m$ is attainable. Since $d \gg m$, the linear system is underdetermined and its feasible set contains infinitely many solutions. CRAFT resolves this non-uniqueness by choosing the solution closest to the momentum-like reference $\hat{\mathbf{g}}_t$, which is the Euclidean projection onto the affine set $\{\mathbf{g}\in\mathbb{R}^d \mid \mathbf{U}_t\mathbf{g}=\boldsymbol{\rho}_t\}$ and is given by \eqref{eq:craft_solution}.

If the active directions are rank-deficient, exact feasibility is no longer automatic. When $\boldsymbol{\rho}_t\notin\operatorname{Range}(\mathbf{U}_t)$, the Moore--Penrose update in \eqref{eq:craft_solution} should be interpreted as a least-squares projection in the correction variable, \emph{i.e.}, $\mathbf{g}_t$ is the point closest to $\hat{\mathbf{g}}_t$ among all updates that minimize $\|\mathbf{U}_t\mathbf{g}-\boldsymbol{\rho}_t\|_2^2$. The resulting residual, denoted by $\boldsymbol{\delta}_t$, is analyzed in Corollary~\ref{cor:approx_proj_extension}.

\subsection{Convergence analysis for CRAFT}
\label{app:full_multi_step}

\subsubsection{Assumptions}
\begin{assumption}[$L$-Smoothness]
\label{ass:full_smooth}
Each local loss $f_i$ is $L$-smooth, \emph{i.e.}, for all $\boldsymbol{\theta},\boldsymbol{\theta}' \in \mathbb{R}^d$,
\begin{equation*}
    \|\nabla f_i(\boldsymbol{\theta}')
    -
    \nabla f_i(\boldsymbol{\theta})\|
    \le
    L\|\boldsymbol{\theta}'-\boldsymbol{\theta}\|.
\end{equation*}
Hence, $F(\boldsymbol{\theta})=\sum_{i=1}^N\rho_i f_i(\boldsymbol{\theta})$ is also $L$-smooth.
\end{assumption}

\begin{assumption}[Lower-bounded objective]
\label{ass:full_lower_bound}
The global objective is bounded below, \emph{i.e.}, there exists a constant $F^\star \in \mathbb{R}$ such that
$F(\boldsymbol{\theta}) \geq F^\star$ for all $\boldsymbol{\theta} \in \mathbb{R}^d$.
\end{assumption}

\begin{assumption}[Bounded gradients and updates]
\label{ass:full_bounded_quantities}
There exist constants $G_f,G>0$ such that $\|\nabla f_i(\boldsymbol{\theta})\|\le G_f$ for all $i$ and all $\boldsymbol{\theta}$, and the update direction satisfies $\|\mathbf{g}_t\|\le G$ for every round $t$.
\end{assumption}

These are common regularity conditions in non-convex FL analysis. The lower-boundedness assumption defines $F^\star$ as a finite lower bound rather than as an attained global minimizer, and it is used only when telescoping objective values. Bounded gradient assumptions also appear in FL analyses such as \citep{Li2020On,reddi2021adaptive, yu2019parallel}. The bounded-update condition can be enforced in practice through norm clipping controls. If such clipping is applied after the projection step, however, it may change the equality constraint and is therefore covered by the projection residual in Corollary~\ref{cor:approx_proj_extension}. Importantly, our analysis does not impose any distributional assumption on the client datasets and allows heterogeneous local training steps $K_i$ for each client $i\in[N]$.

\subsubsection{Main results}
We first define the constants that appear in the main bound.
\begin{equation}
\label{eq:B_drift_def}
K_{\max}:=\max_{i\in[N]}K_i,
\quad 
\rho_{\min}:=\min_{i\in[N]}\rho_i,
\quad
c:=\frac{\rho_{\min}}{G_f},
\quad
B_{\mathrm{drift}}
:=
\frac{L G_f}{2}
\left(
G+\sum_{i=1}^N \rho_i^2
\right).
\end{equation}

\begin{theorem}[Convergence Bound for CRAFT]
\label{thm:full_multi_step_rate}
Consider CRAFT under full client participation and local gradient-descent training in the feasible full-vector alignment-constrained setting. Suppose Assumptions~\ref{ass:full_smooth}--\ref{ass:full_bounded_quantities} hold. Then, for any $T\ge1$,
\begin{equation}
    \label{eq:full_multi_step_main_bound}
    \frac{1}{T}
    \sum_{t=0}^{T-1}
    \|\nabla F(\boldsymbol{\theta}_t)\|^2
    \le
    \frac{F(\boldsymbol{\theta}_0)-F^\star}{c\eta_g T}
    +
    \frac{L\eta_g G^2}{2c}
    +
    \frac{B_{\mathrm{drift}}}{c}\eta_l(K_{\max}-1).
\end{equation}
\end{theorem}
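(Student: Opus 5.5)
The plan is the standard smoothness-based descent argument. The work goes into lower-bounding the inner product $\langle\nabla F(\boldsymbol{\theta}_t),\mathbf{g}_t\rangle$ using only the alignment constraints $\mathbf{U}_t\mathbf{g}_t=\boldsymbol{\rho}_t$.

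\textbf{Step 1 (descent lemma).} By Assumption~\ref{ass:full_smooth}, the server step \eqref{eq:server_update} and Assumption~\ref{ass:full_bounded_quantities} give
\[
F(\boldsymbol{\theta}_{t+1})\le F(\boldsymbol{\theta}_t)-\eta_g\langle\nabla F(\boldsymbol{\theta}_t),\mathbf{g}_t\rangle+\tfrac{L\eta_g^2G^2}{2}.
\]
The goal is therefore a per-round bound of the form
\[
\langle\nabla F(\boldsymbol{\theta}_t),\mathbf{g}_t\rangle\ge c\|\nabla F(\boldsymbol{\theta}_t)\|^2-B_{\mathrm{drift}}\eta_l(K_{\max}-1).
\]
Given this bound, I sum over $t$, telescope, use Assumption~\ref{ass:full_lower_bound} to bound $F(\boldsymbol{\theta}_0)-F(\boldsymbol{\theta}_T)\le F(\boldsymbol{\theta}_0)-F^\star$, and divide by $c\eta_gT$. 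This produces exactly the three terms of \eqref{eq:full_multi_step_main_bound}.

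\textbf{Step 2 (what the constraint says).} Under full participation the targets are $\rho_i^t=\rho_i$. With local gradient descent, $\mathbf{g}_i^t=\eta_l\sum_{k<K_i}\nabla f_i(\boldsymbol{\theta}_i^{t,k})=\eta_lK_i\mathbf{v}_i$, where $\mathbf{v}_i$ is the average of the local gradients along the trajectory. The constraint $\langle\mathbf{u}_i^t,\mathbf{g}_t\rangle=\rho_i$ is scale-invariant, so it reads
\[
\langle\mathbf{v}_i,\mathbf{g}_t\rangle=\rho_i\|\mathbf{v}_i\|.
\]
Next I control the drift. Each local iterate moves by at most $\eta_lG_f$ per step, so $\|\boldsymbol{\theta}_i^{t,k}-\boldsymbol{\theta}_t\|\le k\eta_lG_f$. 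Smoothness then gives
\[
\|\nabla f_i(\boldsymbol{\theta}_t)-\mathbf{v}_i\|\le \tfrac{L}{K_i}\sum_{k<K_i}k\eta_lG_f=\tfrac{LG_f\eta_l(K_i-1)}{2}=:\epsilon_i .
\]

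\textbf{Step 3 (lower bound).} I decompose
\[
\langle\nabla f_i(\boldsymbol{\theta}_t),\mathbf{g}_t\rangle=\rho_i\|\mathbf{v}_i\|+\langle\nabla f_i(\boldsymbol{\theta}_t)-\mathbf{v}_i,\mathbf{g}_t\rangle\ge\rho_i\|\nabla f_i(\boldsymbol{\theta}_t)\|-(\rho_i+G)\epsilon_i .
\]
Here I used $\|\mathbf{v}_i\|\ge\|\nabla f_i\|-\epsilon_i$ and Cauchy--Schwarz with $\|\mathbf{g}_t\|\le G$. Weighting by $\rho_i$ and summing gives
\[
\langle\nabla F(\boldsymbol{\theta}_t),\mathbf{g}_t\rangle\ge\sum_i\rho_i^2\|\nabla f_i(\boldsymbol{\theta}_t)\|-\sum_i\rho_i(\rho_i+G)\epsilon_i .
\]
For the error term, $\epsilon_i\le LG_f\eta_l(K_{\max}-1)/2$ and $\sum_i\rho_i(\rho_i+G)=G+\sum_i\rho_i^2$, so the error is at most $B_{\mathrm{drift}}\eta_l(K_{\max}-1)$. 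For the main term,
\[
\sum_i\rho_i^2\|\nabla f_i\|\ge\rho_{\min}\sum_i\rho_i\|\nabla f_i\|\ge\rho_{\min}\|\nabla F\|\ge\tfrac{\rho_{\min}}{G_f}\|\nabla F\|^2,
\]
using the triangle inequality and $\|\nabla F\|\le G_f$. This gives the constant $c$ and closes Step 1.

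\textbf{Main obstacle.} The delicate point is Step 3's conversion of the constraint on normalized directions into a bound on the unnormalized gradients $\nabla f_i(\boldsymbol{\theta}_t)$. The inequality $\|\mathbf{v}_i\|\ge\|\nabla f_i\|-\epsilon_i$ has to be carried through without dividing by $\|\mathbf{v}_i\|$. Degenerate cases with $\mathbf{v}_i=\mathbf{0}$ also need care. Such a client contributes no constraint, and I would take $\mathbf{u}_i$ as arbitrary or treat the client as excluded. The bound above still holds for it, because $\|\nabla f_i(\boldsymbol{\theta}_t)\|\le\epsilon_i$ and hence $\rho_i\|\nabla f_i\|-(\rho_i+G)\epsilon_i\le0\le$ anything needed after rearranging.
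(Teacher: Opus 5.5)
Your proof is correct and follows the paper's argument essentially step for step. You use the same local drift bound, stated for the averaged trajectory gradient $\mathbf{v}_i=\mathbf{g}_i^t/(\eta_lK_i)$ so that $\epsilon_i=\|\mathbf{e}_i^t\|/(\eta_lK_i)$, rather than dividing by $\eta_lK_i$ at the end as the paper does. The rest also matches: the reverse-triangle plus Cauchy--Schwarz client-wise bound, the $\rho_i$-weighted sum with $\sum_i\rho_i(\rho_i+G)=G+\sum_i\rho_i^2$, the $\rho_{\min}$/$G_f$ conversion, and the descent-lemma telescoping.

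One small point concerns your degenerate-case aside. The justification via ``$\le 0$'' is loose, since $\langle\nabla f_i(\boldsymbol{\theta}_t),\mathbf{g}_t\rangle$ may be negative. No special argument is needed, though. When $\mathbf{v}_i=\mathbf{0}$, the identity $\langle\mathbf{v}_i,\mathbf{g}_t\rangle=\rho_i\|\mathbf{v}_i\|$ holds trivially, so your main chain applies unchanged. The paper simply assumes nonzero uploaded updates.
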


Before proving the theorem, we state the standard rate obtained from the chosen step sizes.

\begin{corollary}[CRAFT Rate under Step-Size Schedules]
\label{cor:full_multi_step_rate}
Under the conditions of Theorem~\ref{thm:full_multi_step_rate}, if the server learning rate is chosen as $\eta_g=\Theta(1/\sqrt{T})$, then
\begin{equation}
    \frac{1}{T}
    \sum_{t=0}^{T-1}
    \|\nabla F(\boldsymbol{\theta}_t)\|^2
    =
    \mathcal{O}\left(\frac{1}{\sqrt{T}}\right)
    +
    \mathcal{O}\bigl(\eta_l(K_{\max}-1)\bigr).
\end{equation}
If, in addition, $K_{\max}$ is fixed and the local learning rate is scheduled as $\eta_l=\mathcal{O}(1/\sqrt{T})$, then the deterministic drift term also decays, and we recover the standard non-convex convergence rate:
\begin{equation}
    \frac{1}{T}
    \sum_{t=0}^{T-1}
    \|\nabla F(\boldsymbol{\theta}_t)\|^2
    =
    \mathcal{O}\left(\frac{1}{\sqrt{T}}\right).
\end{equation}
\end{corollary}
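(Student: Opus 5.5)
The corollary follows by substituting the step-size schedules into the bound \eqref{eq:full_multi_step_main_bound} of Theorem~\ref{thm:full_multi_step_rate}. The plan is to check that every constant multiplying a step size is independent of $T$, $\eta_g$ and $\eta_l$, and then read off the order of each of the three terms.

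\textbf{Step 1: unpack the schedule.} I would interpret $\eta_g=\Theta(1/\sqrt{T})$ as the existence of constants $0<a_1\le a_2$, independent of $T$, with $a_1/\sqrt{T}\le \eta_g\le a_2/\sqrt{T}$ for all $T\ge 1$. The two-sided bound is needed because the first term of \eqref{eq:full_multi_step_main_bound} uses a lower bound on $\eta_g$, while the second uses an upper bound.

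\textbf{Step 2: bound the optimization terms.} The first term satisfies
\[
\frac{F(\boldsymbol{\theta}_0)-F^\star}{c\eta_g T}\le \frac{F(\boldsymbol{\theta}_0)-F^\star}{c\,a_1\sqrt{T}}.
\]
The second term satisfies
\[
\frac{L\eta_g G^2}{2c}\le \frac{L a_2 G^2}{2c\sqrt{T}}.
\]
Both are $\mathcal{O}(1/\sqrt{T})$, since $F(\boldsymbol{\theta}_0)-F^\star$ is finite by Assumption~\ref{ass:full_lower_bound}. Likewise $c=\rho_{\min}/G_f>0$ and $L$, $G$ are fixed constants by Assumptions~\ref{ass:full_smooth} and \ref{ass:full_bounded_quantities}.

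\textbf{Step 3: the drift term.} The third term $\frac{B_{\mathrm{drift}}}{c}\eta_l(K_{\max}-1)$ is left unchanged. Its prefactor $B_{\mathrm{drift}}/c$ depends only on $L$, $G_f$, $G$ and $\{\rho_i\}$, as defined in \eqref{eq:B_drift_def}, so it is $\mathcal{O}(\eta_l(K_{\max}-1))$. This gives the first display of the corollary.

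For the second display, assume $K_{\max}$ is fixed and $\eta_l\le b/\sqrt{T}$ for some constant $b$. Then the drift term is at most $B_{\mathrm{drift}}\,b\,(K_{\max}-1)/(c\sqrt{T})=\mathcal{O}(1/\sqrt{T})$. Summing the three $\mathcal{O}(1/\sqrt{T})$ contributions gives the standard non-convex rate.

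\textbf{Main obstacle.} The only delicate point is the uniformity of the constants, specifically that the update bound $G$ in Assumption~\ref{ass:full_bounded_quantities} does not itself depend on $T$ or on the step sizes. I would handle this by noting that the assumption is stated with a fixed $G$ for every round. In practice it is enforced by clipping, and any resulting constraint violation is deferred to the residual analysis of Corollary~\ref{cor:approx_proj_extension}. Under that reading, all prefactors are $T$-independent, and the argument reduces to the substitutions above.
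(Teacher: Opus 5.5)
Your proposal is correct and follows the same route as the paper, which states this corollary without a separate proof and treats it as direct substitution of $\eta_g=\Theta(1/\sqrt{T})$, and then $\eta_l=\mathcal{O}(1/\sqrt{T})$ with $K_{\max}$ fixed, into the bound of Theorem~\ref{thm:full_multi_step_rate}. Your version is slightly more careful than the paper's: you spell out the two-sided $\Theta$ bounds on $\eta_g$ and note that $c$, $L$, $G$, $B_{\mathrm{drift}}$ and $F(\boldsymbol{\theta}_0)-F^\star$ must not depend on $T$.
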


\paragraph{Sketch of the proof.} The proof of Theorem~\ref{thm:full_multi_step_rate} is deferred until after stating three supporting lemmas. The conflict-free alignment constraint enforces positive alignment between $\mathbf{g}_t$ and each accumulated client update $\mathbf{g}_i^t$, while multi-step local training makes $\mathbf{g}_i^t$ drift from the idealized gradient $\eta_lK_i\nabla f_i(\boldsymbol{\theta}_t)$. Lemma~\ref{lem:multi_step_decomp} quantifies this local drift, Lemma~\ref{lem:weighted_grad_lower} converts the resulting client-wise lower bounds into a global-gradient lower bound, and Lemma~\ref{lem:multi_step_alignment} combines them into a common-descent certificate. The theorem then follows from the smoothness descent lemma and telescoping over communication rounds.

In Lemma~\ref{lem:multi_step_decomp}, we first quantify the drift error incurred by replacing the sum of gradients along the local trajectory with the gradient evaluated at the round-start broadcast model.

\begin{lemma}[Local Drift Bound]
\label{lem:multi_step_decomp}
Under Assumptions~\ref{ass:full_smooth} and~\ref{ass:full_bounded_quantities}, for each client $i$ and round $t$,
\begin{equation}
    \label{eq:multi_step_decomp}
    \mathbf{g}_i^t
    =
    \eta_lK_i\nabla f_i(\boldsymbol{\theta}_t)
    +
    \mathbf{e}_i^t,
\end{equation}
with
\begin{equation}
    \label{eq:multi_step_drift_bound}
    \|\mathbf{e}_i^t\|
    \le
    \frac{L G_f}{2}\eta_l^2K_i(K_i-1).
\end{equation}
\end{lemma}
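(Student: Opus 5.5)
Since local training is deterministic gradient descent, the accumulated update telescopes: summing \eqref{eq:local_sgd} over $k=0,\dots,K_i-1$ gives
\begin{equation*}
\mathbf{g}_i^t=\boldsymbol{\theta}_t-\boldsymbol{\theta}_i^{t,K_i}=\eta_l\sum_{k=0}^{K_i-1}\nabla f_i(\boldsymbol{\theta}_i^{t,k}).
\end{equation*}
We therefore define the drift term as the deviation from the round-start gradient,
\begin{equation*}
\mathbf{e}_i^t:=\eta_l\sum_{k=0}^{K_i-1}\bigl(\nabla f_i(\boldsymbol{\theta}_i^{t,k})-\nabla f_i(\boldsymbol{\theta}_t)\bigr).
\end{equation*}
With this choice the decomposition \eqref{eq:multi_step_decomp} holds by construction, and all the work is in bounding $\|\mathbf{e}_i^t\|$.

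For the bound, we apply the triangle inequality and then $L$-smoothness (Assumption~\ref{ass:full_smooth}) to each summand:
\begin{equation*}
\|\mathbf{e}_i^t\|\le\eta_l L\sum_{k=0}^{K_i-1}\|\boldsymbol{\theta}_i^{t,k}-\boldsymbol{\theta}_t\|.
\end{equation*}
Next we control how far the local iterate has moved. Because $\boldsymbol{\theta}_i^{t,k}-\boldsymbol{\theta}_t=-\eta_l\sum_{j=0}^{k-1}\nabla f_i(\boldsymbol{\theta}_i^{t,j})$, the bounded-gradient part of Assumption~\ref{ass:full_bounded_quantities} gives
\begin{equation*}
\|\boldsymbol{\theta}_i^{t,k}-\boldsymbol{\theta}_t\|\le\eta_l k G_f.
\end{equation*}
Summing this over $k$ uses $\sum_{k=0}^{K_i-1}k=K_i(K_i-1)/2$, which yields
\begin{equation*}
\|\mathbf{e}_i^t\|\le\frac{LG_f}{2}\eta_l^2K_i(K_i-1),
\end{equation*}
exactly \eqref{eq:multi_step_drift_bound}.

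There is no real obstacle. The one point to get right is that the $k=0$ term contributes nothing, since $\boldsymbol{\theta}_i^{t,0}=\boldsymbol{\theta}_t$. That is why the bound carries the factor $K_i-1$ and vanishes when $K_i=1$. We also rely on the gradients in \eqref{eq:local_sgd} being full local gradients, matching the lemma's deterministic setting, so that the uniform bound $G_f$ applies to every step.
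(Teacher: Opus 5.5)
Your proposal is correct and follows the paper's proof essentially step for step. Both arguments telescope the local gradient-descent steps and define $\mathbf{e}_i^t$ as the accumulated deviation from $\nabla f_i(\boldsymbol{\theta}_t)$, then bound each summand via $L$-smoothness and the displacement estimate $\|\boldsymbol{\theta}_i^{t,k}-\boldsymbol{\theta}_t\|\le\eta_l kG_f$, and finish with $\sum_{k=0}^{K_i-1}k=K_i(K_i-1)/2$. Your remark that the local steps must use full (deterministic) gradients for $G_f$ to bound every step matches the local gradient-descent setting the paper assumes implicitly.
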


\begin{proof}
Adding and subtracting $\nabla f_i(\boldsymbol{\theta}_t)$ in the local client updates gives
\begin{equation}
    \mathbf{g}_i^t
    =
    \eta_l
    \sum_{k=0}^{K_i-1}
    \nabla f_i(\boldsymbol{\theta}_i^{t,k})
    =
    \eta_l K_i\nabla f_i(\boldsymbol{\theta}_t)
    +
    \mathbf{e}_i^t,
\end{equation}
where
\begin{equation}
    \mathbf{e}_i^t
    :=
    \eta_l
    \sum_{k=0}^{K_i-1}
    \left(
    \nabla f_i(\boldsymbol{\theta}_i^{t,k})
    -
    \nabla f_i(\boldsymbol{\theta}_t)
    \right).
\end{equation}
For $k=0$, the difference is zero. For $k\ge1$, the local recursion gives
\begin{equation}
    \boldsymbol{\theta}_i^{t,k}-\boldsymbol{\theta}_t
    =
    -\eta_l
    \sum_{r=0}^{k-1}
    \nabla f_i(\boldsymbol{\theta}_i^{t,r}).
\end{equation}
Taking norms and applying the triangle inequality gives
\begin{equation}\begin{aligned}
    \|\boldsymbol{\theta}_i^{t,k}-\boldsymbol{\theta}_t\|
    &=
    \left\|
    -\eta_l
    \sum_{r=0}^{k-1}
    \nabla f_i(\boldsymbol{\theta}_i^{t,r})
    \right\| \\
    &\le
    \eta_l
    \sum_{r=0}^{k-1}
    \|\nabla f_i(\boldsymbol{\theta}_i^{t,r})\| \\
    &\le
    \eta_l kG_f.
\end{aligned}\end{equation}
We now bound the accumulated drift error.
\begin{equation}
\begin{aligned}
    \|\mathbf{e}_i^t\|
    &=
    \left\|
    \eta_l
    \sum_{k=0}^{K_i-1}
    \left(
    \nabla f_i(\boldsymbol{\theta}_i^{t,k})
    -
    \nabla f_i(\boldsymbol{\theta}_t)
    \right)
    \right\| \\
    &\le
    \eta_l
    \sum_{k=0}^{K_i-1}
    \left\|
    \nabla f_i(\boldsymbol{\theta}_i^{t,k})
    -
    \nabla f_i(\boldsymbol{\theta}_t)
    \right\|  \\
    &\le
    \eta_l
    \sum_{k=0}^{K_i-1}
    L\|\boldsymbol{\theta}_i^{t,k}-\boldsymbol{\theta}_t\|  \\
    &\le
    L G_f \eta_l^2
    \sum_{k=0}^{K_i-1} k\\
    &=
    \frac{L G_f}{2}\eta_l^2K_i(K_i-1).
\end{aligned}
\end{equation}

\end{proof}

\begin{lemma}[Weighted Gradient Lower Bound]
\label{lem:weighted_grad_lower}
Under Assumption~\ref{ass:full_bounded_quantities}, for any round $t$, let
\begin{equation}
    \nabla f_i^t:=\nabla f_i(\boldsymbol{\theta}_t),
    \qquad
    \nabla F_t:=\nabla F(\boldsymbol{\theta}_t).
\end{equation}
Then
\begin{equation}
    \label{eq:weighted_grad_lower}
    \sum_{i=1}^N\rho_i^2\|\nabla f_i^t\|
    \ge
    \rho_{\min}\|\nabla F_t\|
    \ge
    \frac{\rho_{\min}}{G_f}\|\nabla F_t\|^2.
\end{equation}
\end{lemma}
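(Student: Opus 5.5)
The plan is to prove the two inequalities in \eqref{eq:weighted_grad_lower} separately. Both follow from the triangle inequality together with two properties of the weights: they are nonnegative, and they sum to one, $\sum_{i=1}^N\rho_i=1$. The latter holds by the data-proportional definition $\rho_i=|\mathcal{D}_i|/\sum_j|\mathcal{D}_j|$. No smoothness is needed; only the bound $\|\nabla f_i\|\le G_f$ from Assumption~\ref{ass:full_bounded_quantities} enters.

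For the first inequality, the key observation is that $\rho_i\ge\rho_{\min}\ge 0$ gives $\rho_i^2\ge\rho_{\min}\rho_i$ for every $i$. Hence
\begin{equation*}
\sum_{i=1}^N\rho_i^2\|\nabla f_i^t\|\;\ge\;\rho_{\min}\sum_{i=1}^N\rho_i\|\nabla f_i^t\|\;\ge\;\rho_{\min}\Bigl\|\sum_{i=1}^N\rho_i\nabla f_i^t\Bigr\|=\rho_{\min}\|\nabla F_t\|,
\end{equation*}
where the second step is the triangle inequality and the last step uses $\nabla F_t=\sum_i\rho_i\nabla f_i^t$, which follows from \eqref{eq:global_obj}.

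For the second inequality, I would first bound the global gradient using the same triangle inequality and the gradient bound:
\begin{equation*}
\|\nabla F_t\|\le\sum_i\rho_i\|\nabla f_i^t\|\le G_f\sum_i\rho_i=G_f.
\end{equation*}
Multiplying both sides by $\|\nabla F_t\|/G_f\ge0$ gives $\|\nabla F_t\|\ge\|\nabla F_t\|^2/G_f$. Multiplying this by $\rho_{\min}\ge0$ yields the claim.

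There is no real obstacle here. The only point needing care is to state explicitly that the weights are normalized and nonnegative, since both steps rely on $\sum_i\rho_i=1$ and on $\rho_i\ge\rho_{\min}\ge0$.
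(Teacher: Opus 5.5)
Your proposal is correct and follows essentially the same argument as the paper: you bound $\rho_i^2\ge\rho_{\min}\rho_i$ and apply the triangle inequality to obtain $\rho_{\min}\|\nabla F_t\|$, then use $\|\nabla F_t\|\le\sum_i\rho_i\|\nabla f_i^t\|\le G_f$ to pass to the squared norm. Your explicit remark that both steps rely on $\rho_i\ge 0$ and $\sum_i\rho_i=1$ is a helpful clarification of what the paper calls ``the probability weights.''
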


\begin{proof}
The first step uses only the definition of $\rho_{\min}$ and the probability weights.
\begin{equation}\begin{aligned}
    \sum_{i=1}^N\rho_i^2\|\nabla f_i^t\|
    &=
    \sum_{i=1}^N
    \rho_i
    \bigl(\rho_i\|\nabla f_i^t\|\bigr) \\
    &\ge
    \rho_{\min}
    \sum_{i=1}^N\rho_i\|\nabla f_i^t\| \\
    &\ge
    \rho_{\min}
    \left\|
    \sum_{i=1}^N\rho_i\nabla f_i^t
    \right\| \\
    &=
    \rho_{\min}
    \|\nabla F_t\|.
\end{aligned}\end{equation}
For the squared-gradient conversion, bounded local gradients imply
\begin{equation}\begin{aligned}
    \|\nabla F_t\|
    &=
    \left\|
    \sum_{i=1}^N\rho_i\nabla f_i^t
    \right\| \\
    &\le
    \sum_{i=1}^N\rho_i\|\nabla f_i^t\|
    \le
    G_f,
\end{aligned}\end{equation}
and therefore
\begin{equation}
    \rho_{\min}\|\nabla F_t\|
    \ge
    \frac{\rho_{\min}}{G_f}\|\nabla F_t\|^2.
\end{equation}
Combining the inequalities proves the claim.
\end{proof}

\begin{lemma}[Common Descent Certificate]
\label{lem:multi_step_alignment}
In the feasible full-vector alignment-constrained setting described above, under Assumptions~\ref{ass:full_smooth} and~\ref{ass:full_bounded_quantities},
\begin{equation}
    \label{eq:multi_step_alignment}
    \langle \nabla F(\boldsymbol{\theta}_t),\mathbf{g}_t\rangle
    \ge
    c\|\nabla F(\boldsymbol{\theta}_t)\|^2
    -
    B_{\mathrm{drift}}\,\eta_l(K_{\max}-1),
\end{equation}
where $c$ and $B_{\mathrm{drift}}$ are defined in \eqref{eq:B_drift_def}.
\end{lemma}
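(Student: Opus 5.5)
The plan is to exploit the exact equality constraints satisfied by the CRAFT update and transfer them from the accumulated client updates $\mathbf{g}_i^t$ to the round-start gradients $\nabla f_i(\boldsymbol{\theta}_t)$, using Lemma~\ref{lem:multi_step_decomp}. Under full participation, the renormalized targets coincide with the global weights, $\rho_i^t=\rho_i$. In the feasible full-vector setting, the CRAFT solution \eqref{eq:craft_solution} therefore satisfies $\mathbf{U}_t\mathbf{g}_t=\boldsymbol{\rho}_t$ exactly. With the unit normalization of Appendix~\ref{app:feasibility_discussion}, this reads
\begin{equation*}
\langle \mathbf{g}_i^t,\mathbf{g}_t\rangle=\rho_i\|\mathbf{g}_i^t\| \qquad \text{for all } i\in[N].
\end{equation*}
This identity is the only property of the CRAFT update we need, apart from $\|\mathbf{g}_t\|\le G$ from Assumption~\ref{ass:full_bounded_quantities}.

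\textbf{Per-client bound.} Next I substitute the decomposition $\mathbf{g}_i^t=\eta_lK_i\nabla f_i^t+\mathbf{e}_i^t$ from Lemma~\ref{lem:multi_step_decomp} on both sides of the identity. On the left, this gives
\begin{equation*}
\eta_lK_i\langle\nabla f_i^t,\mathbf{g}_t\rangle=\rho_i\|\mathbf{g}_i^t\|-\langle\mathbf{e}_i^t,\mathbf{g}_t\rangle .
\end{equation*}
On the right, I use the reverse triangle inequality $\|\mathbf{g}_i^t\|\ge\eta_lK_i\|\nabla f_i^t\|-\|\mathbf{e}_i^t\|$, and Cauchy--Schwarz gives $|\langle\mathbf{e}_i^t,\mathbf{g}_t\rangle|\le G\|\mathbf{e}_i^t\|$. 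Dividing by $\eta_lK_i>0$ and inserting the drift bound \eqref{eq:multi_step_drift_bound} yields
\begin{equation*}
\langle\nabla f_i^t,\mathbf{g}_t\rangle\ \ge\ \rho_i\|\nabla f_i^t\|-(\rho_i+G)\frac{LG_f}{2}\eta_l(K_i-1)
\ \ge\ \rho_i\|\nabla f_i^t\|-(\rho_i+G)\frac{LG_f}{2}\eta_l(K_{\max}-1).
\end{equation*}

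\textbf{Aggregation.} Multiplying by $\rho_i$ and summing over $i$ gives, by linearity, $\langle\nabla F_t,\mathbf{g}_t\rangle$ on the left. The drift terms sum to
\begin{equation*}
\frac{LG_f}{2}\eta_l(K_{\max}-1)\sum_{i=1}^N\rho_i(\rho_i+G)=\frac{LG_f}{2}\Bigl(G+\sum_{i=1}^N\rho_i^2\Bigr)\eta_l(K_{\max}-1),
\end{equation*}
because $\sum_i\rho_i=1$. This is exactly $B_{\mathrm{drift}}\eta_l(K_{\max}-1)$. The main term is $\sum_i\rho_i^2\|\nabla f_i^t\|$, and Lemma~\ref{lem:weighted_grad_lower} bounds it below by $\frac{\rho_{\min}}{G_f}\|\nabla F_t\|^2=c\|\nabla F_t\|^2$. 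Together these give \eqref{eq:multi_step_alignment}.

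\textbf{Main obstacle.} The delicate point is the lower bound on $\rho_i\|\mathbf{g}_i^t\|$. It needs the reverse triangle inequality rather than a naive bound, and it is what produces the extra $\rho_i$ inside $B_{\mathrm{drift}}$. A second point is the degenerate case $\mathbf{g}_i^t=\mathbf{0}$, where the unit normalization is undefined. I would treat this separately. In that case $\eta_lK_i\nabla f_i^t=-\mathbf{e}_i^t$, so $\|\nabla f_i^t\|\le\frac{LG_f}{2}\eta_l(K_i-1)$. Then $\langle\nabla f_i^t,\mathbf{g}_t\rangle\ge-G\|\nabla f_i^t\|$, and hence
\begin{equation*}
\langle\nabla f_i^t,\mathbf{g}_t\rangle\ \ge\ \rho_i\|\nabla f_i^t\|-(\rho_i+G)\frac{LG_f}{2}\eta_l(K_i-1),
\end{equation*}
which is the same per-client inequality. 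The aggregation step is therefore unchanged.
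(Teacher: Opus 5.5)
Your proof is correct and follows the paper's argument essentially step for step. Both start from the exact constraint $\langle \mathbf{g}_i^t,\mathbf{g}_t\rangle=\rho_i\|\mathbf{g}_i^t\|$, apply the drift decomposition with Cauchy--Schwarz and the reverse triangle inequality to get the per-client bound with factor $(\rho_i+G)$, weight by $\rho_i$ and use $\sum_i\rho_i=1$ to recover $B_{\mathrm{drift}}$, and finish with Lemma~\ref{lem:weighted_grad_lower}. Your separate handling of $\mathbf{g}_i^t=\mathbf{0}$ is valid but unnecessary, because the paper's exact setting already assumes nonzero uploaded updates.
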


\begin{proof}
Let $\nabla f_i^t:=\nabla f_i(\boldsymbol{\theta}_t)$ and $\nabla F_t:=\nabla F(\boldsymbol{\theta}_t)$. The exact conflict-free alignment constraint and the definition $\mathbf{u}_i^t=\mathbf{g}_i^t/\|\mathbf{g}_i^t\|$ imply
\begin{equation}\begin{aligned}
    \langle \mathbf{g}_i^t,\mathbf{g}_t\rangle
    &=
    \|\mathbf{g}_i^t\|
    \langle \mathbf{u}_i^t,\mathbf{g}_t\rangle \\
    &=
    \rho_i\|\mathbf{g}_i^t\|.
\end{aligned}\end{equation}
Using the drift decomposition \eqref{eq:multi_step_decomp}, each client gradient has the following alignment with the projected direction.
\begin{equation}\begin{aligned}
    \eta_lK_i
    \langle \nabla f_i^t,\mathbf{g}_t\rangle
    &\overset{\eqref{eq:multi_step_decomp}}{=}
    \langle \mathbf{g}_i^t-\mathbf{e}_i^t,\mathbf{g}_t\rangle  \\
    &=
    \langle \mathbf{g}_i^t,\mathbf{g}_t\rangle
    -
    \langle \mathbf{e}_i^t,\mathbf{g}_t\rangle \\
    &=
    \rho_i\|\mathbf{g}_i^t\|
    -
    \langle \mathbf{e}_i^t,\mathbf{g}_t\rangle \\
    &\ge
    \rho_i\|\mathbf{g}_i^t\|
    -
    \|\mathbf{e}_i^t\|\,\|\mathbf{g}_t\|.
\end{aligned}\end{equation}
The uploaded displacement norm is lower bounded by the reverse triangle inequality.
\begin{equation}\begin{aligned}
    \|\mathbf{g}_i^t\|
    &\overset{\eqref{eq:multi_step_decomp}}{=}
    \|\eta_lK_i\nabla f_i^t+\mathbf{e}_i^t\|
    \ge
    \eta_lK_i\|\nabla f_i^t\|
    -
    \|\mathbf{e}_i^t\|.
\end{aligned}\end{equation}
Substituting this lower bound and dividing by $\eta_lK_i$ gives
\begin{equation}\begin{aligned}
    \langle \nabla f_i^t,\mathbf{g}_t\rangle
    &\ge
    \rho_i\|\nabla f_i^t\|
    -
    \frac{\rho_i+\|\mathbf{g}_t\|}{\eta_lK_i}
    \|\mathbf{e}_i^t\| \\
    &\overset{\substack{\eqref{eq:multi_step_drift_bound}}}{\ge}
    \rho_i\|\nabla f_i^t\|
    -
    \frac{L G_f}{2}
    (\rho_i+G)\eta_l(K_i-1),
\end{aligned}\end{equation}
where the division is valid because $K_i\ge1$ and $\eta_l>0$. Multiplying by $\rho_i$ and summing over clients yields
\begin{equation}\begin{aligned}
    \langle \nabla F_t,\mathbf{g}_t\rangle
    &=
    \sum_{i=1}^N
    \rho_i
    \langle \nabla f_i^t,\mathbf{g}_t\rangle \\
    &\ge
    \sum_{i=1}^N
    \rho_i^2\|\nabla f_i^t\|
    -
    \frac{L G_f}{2}\eta_l
    \sum_{i=1}^N
    \rho_i(\rho_i+G)(K_i-1) \\
    &\overset{\substack{\eqref{eq:B_drift_def}}}{\ge}
    \sum_{i=1}^N
    \rho_i^2\|\nabla f_i^t\|
    -
    B_{\mathrm{drift}}\eta_l(K_{\max}-1) \\
    &\ge
    c\|\nabla F_t\|^2
    -
    B_{\mathrm{drift}}\eta_l(K_{\max}-1).
\end{aligned}\end{equation}
The last inequality uses Lemma~\ref{lem:weighted_grad_lower}, which gives \eqref{eq:multi_step_alignment}.
\end{proof}

With the common-descent bound in place, we can prove the main convergence result.
\begin{proof}[Proof of Theorem~\ref{thm:full_multi_step_rate}]
Applying $L$-smoothness of $F$ to $\boldsymbol{\theta}_{t+1}=\boldsymbol{\theta}_t-\eta_g\mathbf{g}_t$, we have
\begin{equation}\begin{aligned}
    F(\boldsymbol{\theta}_{t+1})
    &\le
    F(\boldsymbol{\theta}_t)
    -
    \eta_g
    \langle \nabla F(\boldsymbol{\theta}_t),\mathbf{g}_t\rangle
    +
    \frac{L\eta_g^2}{2}\|\mathbf{g}_t\|^2 \\
    &\le
    F(\boldsymbol{\theta}_t)
    -
    c\eta_g\|\nabla F(\boldsymbol{\theta}_t)\|^2
    +
    \eta_gB_{\mathrm{drift}}\eta_l(K_{\max}-1)
    +
    \frac{L\eta_g^2}{2}G^2.
\end{aligned}\end{equation}
The second inequality above uses Lemma~\ref{lem:multi_step_alignment}. Rearranging this one-step descent inequality gives
\begin{equation}
    c\eta_g\|\nabla F(\boldsymbol{\theta}_t)\|^2
    \le
    F(\boldsymbol{\theta}_t)-F(\boldsymbol{\theta}_{t+1})
    +
    \eta_gB_{\mathrm{drift}}\eta_l(K_{\max}-1)
    +
    \frac{L\eta_g^2}{2}G^2.
\end{equation}
Summing from $t=0$ to $T-1$, the objective differences telescope.
\begin{equation}\begin{aligned}
    c\eta_g
    \sum_{t=0}^{T-1}
    \|\nabla F(\boldsymbol{\theta}_t)\|^2
    &\le
    \sum_{t=0}^{T-1}
    \bigl[
    F(\boldsymbol{\theta}_t)-F(\boldsymbol{\theta}_{t+1})
    \bigr]
    +
    T\eta_gB_{\mathrm{drift}}\eta_l(K_{\max}-1)
    +
    \frac{L\eta_g^2}{2}TG^2 \\
    &=
    F(\boldsymbol{\theta}_0)-F(\boldsymbol{\theta}_T)
    +
    T\eta_gB_{\mathrm{drift}}\eta_l(K_{\max}-1)
    +
    \frac{L\eta_g^2}{2}TG^2 \\
    &\le
    F(\boldsymbol{\theta}_0)-F^\star
    +
    T\eta_gB_{\mathrm{drift}}\eta_l(K_{\max}-1)
    +
    \frac{L\eta_g^2}{2}TG^2.
\end{aligned}\end{equation}
Dividing both sides by $c\eta_gT$ gives \eqref{eq:full_multi_step_main_bound}.
\end{proof}

\paragraph{Interpretation of the bound.} When $K_{\max}=1$, the drift term vanishes and the theorem reduces to the one-step bound. For $K_{\max}>1$, local computation introduces a stationarity neighborhood of size $\mathcal{O}(\eta_l(K_{\max}-1))$. By scheduling $\eta_l$ to decay with $T$, this deterministic bias can be forced to zero.

\subsubsection{Average gradient-norm bound}
\label{app:average_grad_norm_bound}

The squared-gradient result above uses the bounded-gradient assumption to convert a linear descent certificate into a squared stationarity measure. If the desired stationarity measure is instead the average gradient norm, the same argument yields a direct bound with alignment constant $\rho_{\min}$.

\begin{lemma}[Linear Common-Descent Bound]
\label{lem:linear_multi_step_alignment}
In the feasible full-vector alignment-constrained setting described above, under Assumptions~\ref{ass:full_smooth} and~\ref{ass:full_bounded_quantities},
\begin{equation}
    \label{eq:linear_multi_step_alignment}
    \langle \nabla F(\boldsymbol{\theta}_t),\mathbf{g}_t\rangle
    \ge
    \rho_{\min}\|\nabla F(\boldsymbol{\theta}_t)\|
    -
    B_{\mathrm{drift}}\,\eta_l(K_{\max}-1),
\end{equation}
where $B_{\mathrm{drift}}$ is defined in \eqref{eq:B_drift_def}.
\end{lemma}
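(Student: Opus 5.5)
The plan is to rerun the argument of Lemma~\ref{lem:multi_step_alignment} without its final squared-gradient conversion, so that no factor $1/G_f$ enters. The per-client estimates are reused unchanged. For each client $i$, the exact alignment constraint gives $\langle \mathbf{g}_i^t,\mathbf{g}_t\rangle=\rho_i\|\mathbf{g}_i^t\|$. We then substitute the drift decomposition \eqref{eq:multi_step_decomp} of Lemma~\ref{lem:multi_step_decomp} and bound $\langle \mathbf{e}_i^t,\mathbf{g}_t\rangle$ by Cauchy--Schwarz, using $\|\mathbf{g}_t\|\le G$. The norm $\|\mathbf{g}_i^t\|$ is lower-bounded by the reverse triangle inequality. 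Dividing by $\eta_lK_i$ and applying \eqref{eq:multi_step_drift_bound} yields the client-wise bound
\begin{equation*}
\langle \nabla f_i^t,\mathbf{g}_t\rangle\ \ge\ \rho_i\|\nabla f_i^t\|-\tfrac{LG_f}{2}(\rho_i+G)\,\eta_l(K_i-1).
\end{equation*}

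Next, multiply each client bound by $\rho_i$ and sum over $i\in[N]$, using $\nabla F_t=\sum_i\rho_i\nabla f_i^t$. This gives
\begin{equation*}
\langle\nabla F_t,\mathbf{g}_t\rangle\ \ge\ \sum_i\rho_i^2\|\nabla f_i^t\|-\tfrac{LG_f}{2}\eta_l\sum_i\rho_i(\rho_i+G)(K_i-1).
\end{equation*}
To control the drift sum, replace $K_i-1$ by $K_{\max}-1$ and use $\sum_i\rho_i=1$. Then $\sum_i\rho_i(\rho_i+G)\le G+\sum_i\rho_i^2$, so the drift sum is at most $B_{\mathrm{drift}}\eta_l(K_{\max}-1)$, exactly as defined in \eqref{eq:B_drift_def}.

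For the leading term, invoke only the first inequality of Lemma~\ref{lem:weighted_grad_lower}, namely $\sum_i\rho_i^2\|\nabla f_i^t\|\ge\rho_{\min}\|\nabla F_t\|$. This step pulls out $\rho_{\min}$ and then applies the triangle inequality to the convex combination. We stop here instead of converting to $\|\nabla F_t\|^2/G_f$, which produces \eqref{eq:linear_multi_step_alignment} directly.

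I do not expect a real obstacle, since every ingredient is already established. The only point needing care is to make sure the chain never invokes the gradient bound $G_f$ in the leading term. $G_f$ should appear only through the drift constant via Lemma~\ref{lem:multi_step_decomp}, which is what makes the alignment constant $\rho_{\min}$ rather than $c$.
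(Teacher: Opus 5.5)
Your proposal is correct and follows the paper's proof: the paper also takes the pre-conversion bound $\langle \nabla F_t,\mathbf{g}_t\rangle \ge \sum_{i}\rho_i^2\|\nabla f_i^t\| - B_{\mathrm{drift}}\eta_l(K_{\max}-1)$ from the proof of Lemma~\ref{lem:multi_step_alignment} and applies only the first inequality of Lemma~\ref{lem:weighted_grad_lower}, without converting to $\|\nabla F_t\|^2/G_f$. The only difference is that you re-derive the client-wise and summed bounds explicitly, whereas the paper cites them.
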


\begin{proof}
The proof of Lemma~\ref{lem:multi_step_alignment} first establishes the pre-conversion bound
\begin{equation}\begin{aligned}
    \langle \nabla F_t,\mathbf{g}_t\rangle
    &\ge
    \sum_{i=1}^N
    \rho_i^2\|\nabla f_i^t\|
    -
    B_{\mathrm{drift}}\eta_l(K_{\max}-1) \\
    &\ge
    \rho_{\min}\|\nabla F_t\|
    -
    B_{\mathrm{drift}}\eta_l(K_{\max}-1).
\end{aligned}\end{equation}
The second inequality uses Lemma~\ref{lem:weighted_grad_lower}. Equivalently, this uses only
\begin{equation}
    \sum_{i=1}^N\rho_i^2\|\nabla f_i^t\|
    \ge
    \rho_{\min}\|\nabla F_t\|,
\end{equation}
and does not convert $\|\nabla F_t\|$ into $\|\nabla F_t\|^2/G_f$. This proves \eqref{eq:linear_multi_step_alignment}.
\end{proof}

\begin{corollary}[Average Gradient-Norm Bound]
\label{cor:average_grad_norm_bound}
Under the conditions of Theorem~\ref{thm:full_multi_step_rate}, for any $T\ge1$,
\begin{equation}
    \label{eq:average_grad_norm_bound}
    \frac{1}{T}
    \sum_{t=0}^{T-1}
    \|\nabla F(\boldsymbol{\theta}_t)\|
    \le
    \frac{F(\boldsymbol{\theta}_0)-F^\star}{\rho_{\min}\eta_g T}
    +
    \frac{L\eta_g G^2}{2\rho_{\min}}
    +
    \frac{B_{\mathrm{drift}}}{\rho_{\min}}\eta_l(K_{\max}-1).
\end{equation}
Consequently, choosing $\eta_g=\Theta(1/\sqrt{T})$ gives
\begin{equation}
    \frac{1}{T}
    \sum_{t=0}^{T-1}
    \|\nabla F(\boldsymbol{\theta}_t)\|
    =
    \mathcal{O}\left(\frac{1}{\sqrt{T}}\right)
    +
    \mathcal{O}\bigl(\eta_l(K_{\max}-1)\bigr).
\end{equation}
If $K_{\max}$ is fixed and $\eta_l=\mathcal{O}(1/\sqrt{T})$, then the drift term also decays and
\begin{equation}
    \frac{1}{T}
    \sum_{t=0}^{T-1}
    \|\nabla F(\boldsymbol{\theta}_t)\|
    =
    \mathcal{O}\left(\frac{1}{\sqrt{T}}\right).
\end{equation}
\end{corollary}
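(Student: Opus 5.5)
The plan is to rerun the argument from the proof of Theorem~\ref{thm:full_multi_step_rate}, but with the linear certificate of Lemma~\ref{lem:linear_multi_step_alignment} in place of Lemma~\ref{lem:multi_step_alignment}. The constant $c$ then becomes $\rho_{\min}$, and the stationarity measure becomes $\|\nabla F(\boldsymbol{\theta}_t)\|$ instead of its square.

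First, apply $L$-smoothness of $F$ (Assumption~\ref{ass:full_smooth}) to the server step $\boldsymbol{\theta}_{t+1}=\boldsymbol{\theta}_t-\eta_g\mathbf{g}_t$. This gives
\[
F(\boldsymbol{\theta}_{t+1})\le F(\boldsymbol{\theta}_t)-\eta_g\langle\nabla F(\boldsymbol{\theta}_t),\mathbf{g}_t\rangle+\tfrac{L\eta_g^2}{2}\|\mathbf{g}_t\|^2 .
\]
Bound $\|\mathbf{g}_t\|\le G$ by Assumption~\ref{ass:full_bounded_quantities}. Lower-bound the inner product by \eqref{eq:linear_multi_step_alignment}. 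The result is the one-step inequality
\[
\rho_{\min}\eta_g\|\nabla F(\boldsymbol{\theta}_t)\|\le F(\boldsymbol{\theta}_t)-F(\boldsymbol{\theta}_{t+1})+\eta_g B_{\mathrm{drift}}\eta_l(K_{\max}-1)+\tfrac{L\eta_g^2}{2}G^2 .
\]
Next, sum this from $t=0$ to $T-1$ and let the objective differences telescope to $F(\boldsymbol{\theta}_0)-F(\boldsymbol{\theta}_T)$. Bound that by $F(\boldsymbol{\theta}_0)-F^\star$ using Assumption~\ref{ass:full_lower_bound}. Dividing by $\rho_{\min}\eta_g T$, which is positive since every $\rho_i>0$, yields exactly \eqref{eq:average_grad_norm_bound}.

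For the rates, take $\eta_g=\Theta(1/\sqrt{T})$. The first term is then $\mathcal{O}(1/\sqrt{T})$ because $\rho_{\min}\eta_g T=\Theta(\sqrt{T})$. The second term is $\mathcal{O}(1/\sqrt{T})$ as well. The third term is the constant $B_{\mathrm{drift}}/\rho_{\min}$ times $\eta_l(K_{\max}-1)$, since $B_{\mathrm{drift}}$ depends only on $L,G_f,G,\{\rho_i\}$ and not on $T$. If $K_{\max}$ is fixed and $\eta_l=\mathcal{O}(1/\sqrt{T})$, this last term is also $\mathcal{O}(1/\sqrt{T})$, which gives the final claim.

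I do not expect a genuine obstacle, since all the substantive work is already in Lemma~\ref{lem:linear_multi_step_alignment}. The one point to check is that this lemma is used under the same feasible, full-participation setting assumed by Theorem~\ref{thm:full_multi_step_rate}. Beyond that, the proof is only careful bookkeeping, making sure no $G_f$ factor enters now that the squared-gradient conversion is skipped.
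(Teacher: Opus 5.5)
Your proposal is correct and follows the paper's proof essentially step for step. The paper likewise combines the $L$-smoothness descent inequality with the linear certificate of Lemma~\ref{lem:linear_multi_step_alignment} and $\|\mathbf{g}_t\|\le G$, telescopes, uses $F(\boldsymbol{\theta}_T)\ge F^\star$, divides by $\rho_{\min}\eta_g T$, and obtains the rates by substituting the step sizes.
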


\begin{proof}
By $L$-smoothness of $F$ and the update rule $\boldsymbol{\theta}_{t+1}=\boldsymbol{\theta}_t-\eta_g\mathbf{g}_t$, we have
\begin{equation}\begin{aligned}
    F(\boldsymbol{\theta}_{t+1})
    &\le
    F(\boldsymbol{\theta}_t)
    -
    \eta_g
    \langle \nabla F(\boldsymbol{\theta}_t),\mathbf{g}_t\rangle
    +
    \frac{L\eta_g^2}{2}\|\mathbf{g}_t\|^2 \\
    &\le
    F(\boldsymbol{\theta}_t)
    -
    \rho_{\min}\eta_g
    \|\nabla F(\boldsymbol{\theta}_t)\|
    +
    \eta_gB_{\mathrm{drift}}\eta_l(K_{\max}-1)
    +
    \frac{L\eta_g^2}{2}G^2.
\end{aligned}\end{equation}
The second inequality above uses Lemma~\ref{lem:linear_multi_step_alignment}. Rearranging, summing, and telescoping gives
\begin{equation}\begin{aligned}
    \rho_{\min}\eta_g
    \sum_{t=0}^{T-1}
    \|\nabla F(\boldsymbol{\theta}_t)\|
    &\le
    \sum_{t=0}^{T-1}
    \bigl[
    F(\boldsymbol{\theta}_t)-F(\boldsymbol{\theta}_{t+1})
    \bigr]
    +
    T\eta_gB_{\mathrm{drift}}\eta_l(K_{\max}-1)
    +
    \frac{L\eta_g^2}{2}TG^2 \\
    &=
    F(\boldsymbol{\theta}_0)-F(\boldsymbol{\theta}_T)
    +
    T\eta_gB_{\mathrm{drift}}\eta_l(K_{\max}-1)
    +
    \frac{L\eta_g^2}{2}TG^2 \\
    &\le
    F(\boldsymbol{\theta}_0)-F^\star
    +
    T\eta_gB_{\mathrm{drift}}\eta_l(K_{\max}-1)
    +
    \frac{L\eta_g^2}{2}TG^2.
\end{aligned}\end{equation}
Dividing by $\rho_{\min}\eta_gT$ gives \eqref{eq:average_grad_norm_bound}. The step-size statements follow by substitution.
\end{proof}

\subsection{Layer-wise multi-step projection extension}
\label{app:layerwise_multi_step}
The layer-wise analysis follows the full-vector argument block by block. The only change is that the drift constant and update-norm bound are accumulated over layers. We use
\begin{equation}
\label{eq:layerwise_Bq}
B_q
:=
\frac{L_qG_f}{2}
\left(
G^q
+
\sum_{i=1}^N\rho_i^2
\right),
\qquad
B_{\mathrm{layer}}
:=
\sum_{q=1}^Q B_q,
\qquad
G_{\mathrm{layer}}^2
:=
\sum_{q=1}^Q
\bigl(G^q\bigr)^2.
\end{equation}

\begin{corollary}[Layer-Wise Multi-Step Projection]
\label{cor:layerwise_multi_step_rate}
Consider the full-participation layer-wise CRAFT update \eqref{eq:layerwise_craft_solution} with $Q$ layers, indexed by $q$. Under Assumptions~\ref{ass:full_smooth}--\ref{ass:full_bounded_quantities}, suppose that, for every client $i$ and layer $q$, $\nabla^q f_i$ is $L_q$-Lipschitz with respect to the full parameter norm, and that $\|\mathbf{g}_t^q\|\le G^q$ for all $t,q$. Let $c=\rho_{\min}/G_f$ as in \eqref{eq:B_drift_def}. Then, for any $T\ge1$,
\begin{equation}
    \label{eq:layerwise_multi_main_bound}
    \frac{1}{T}
    \sum_{t=0}^{T-1}
    \|\nabla F(\boldsymbol{\theta}_t)\|^2
    \le
    \frac{F(\boldsymbol{\theta}_0)-F^\star}{c\eta_g T}
    +
    \frac{L\eta_g G_{\mathrm{layer}}^2}{2c}
    +
    \frac{B_{\mathrm{layer}}}{c}\eta_l(K_{\max}-1).
\end{equation}
Consequently, choosing $\eta_g=\Theta(1/\sqrt{T})$ gives the same asymptotic form as Corollary~\ref{cor:full_multi_step_rate}. If $K_{\max}$ is fixed and $\eta_l=\mathcal{O}(1/\sqrt{T})$, the deterministic drift term also decays.
\end{corollary}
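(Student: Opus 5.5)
The plan is to replay the full-vector argument of Theorem~\ref{thm:full_multi_step_rate} block by block, and then sum the per-layer certificates. For each layer $q$, write $\mathbf{g}_i^{t,q}$ for the $q$-th block of the client update and $\nabla^q f_i^t$ for the $q$-th block of $\nabla f_i(\boldsymbol{\theta}_t)$.

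\textbf{Step 1: layer-wise drift decomposition.} First I will establish the block analogue of Lemma~\ref{lem:multi_step_decomp}:
\begin{equation*}
\mathbf{g}_i^{t,q}=\eta_lK_i\nabla^q f_i^t+\mathbf{e}_i^{t,q},
\qquad
\|\mathbf{e}_i^{t,q}\|\le \tfrac{L_qG_f}{2}\eta_l^2K_i(K_i-1).
\end{equation*}
This is the same argument as before. The local trajectory still satisfies $\|\boldsymbol{\theta}_i^{t,k}-\boldsymbol{\theta}_t\|\le \eta_l k G_f$ in the full norm. The assumed $L_q$-Lipschitz continuity of $\nabla^q f_i$ with respect to the full parameter norm then bounds each block difference by $L_q\eta_l k G_f$. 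Summing over $k$ gives the stated constant.

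\textbf{Step 2: per-layer descent certificate.} The layer-wise constraint $\mathbf{U}_t^q\mathbf{g}_t^q=\boldsymbol{\rho}_t$ gives $\langle \mathbf{g}_i^{t,q},\mathbf{g}_t^q\rangle=\rho_i\|\mathbf{g}_i^{t,q}\|$, assuming nonzero blocks and the feasible exact setting. Repeating the chain in Lemma~\ref{lem:multi_step_alignment} (reverse triangle inequality, division by $\eta_lK_i$, and $\|\mathbf{g}_t^q\|\le G^q$) yields
\begin{equation*}
\langle \nabla^q f_i^t,\mathbf{g}_t^q\rangle\ge \rho_i\|\nabla^q f_i^t\|-\tfrac{L_qG_f}{2}(\rho_i+G^q)\eta_l(K_i-1).
\end{equation*}
Multiplying by $\rho_i$ and summing over $i$ gives
\begin{equation*}
\langle \nabla^q F_t,\mathbf{g}_t^q\rangle\ge \sum_i\rho_i^2\|\nabla^q f_i^t\|-B_q\eta_l(K_{\max}-1).
\end{equation*}
This uses $\sum_i\rho_i(\rho_i+G^q)\le \sum_i\rho_i^2+G^q$, which is the same bookkeeping that produced $B_{\mathrm{drift}}$.

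\textbf{Step 3: sum over layers and convert to a global bound.} Adding the certificates over $q$ gives
\begin{equation*}
\langle\nabla F_t,\mathbf{g}_t\rangle\ge \sum_i\rho_i^2\sum_q\|\nabla^q f_i^t\|-B_{\mathrm{layer}}\eta_l(K_{\max}-1).
\end{equation*}
Since $\sum_q\|\nabla^q f_i^t\|\ge\|\nabla f_i^t\|$ by the triangle inequality over orthogonal blocks, Lemma~\ref{lem:weighted_grad_lower} applies unchanged and gives $\sum_i\rho_i^2\|\nabla f_i^t\|\ge c\|\nabla F_t\|^2$. For the quadratic term, the concatenated update satisfies $\|\mathbf{g}_t\|^2=\sum_q\|\mathbf{g}_t^q\|^2\le G_{\mathrm{layer}}^2$. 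Plugging these into the descent lemma for the $L$-smooth $F$, then telescoping and dividing by $c\eta_gT$ exactly as in the proof of Theorem~\ref{thm:full_multi_step_rate}, gives \eqref{eq:layerwise_multi_main_bound}. The step-size consequences follow by substitution, as in Corollary~\ref{cor:full_multi_step_rate}.

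\textbf{Main obstacle.} The delicate point is Step 3's reduction from blockwise norms back to the full gradient. One has to be careful that the per-layer lower bound involves $\|\nabla^q f_i^t\|$, which can be zero in some layers. I will handle this by summing before applying Lemma~\ref{lem:weighted_grad_lower}, using the inequality $\sum_q\|\mathbf{x}^q\|\ge\|\mathbf{x}\|$. A secondary concern is layers where some client block $\mathbf{g}_i^{t,q}$ vanishes, so the normalization is undefined. I will exclude this by assumption, in keeping with the feasible exact setting, and defer it to the numerical variants of Appendix~\ref{app:numerical_active_set}.
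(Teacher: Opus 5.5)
Your proposal is correct and follows essentially the same argument as the paper: the block drift bound from $L_q$-Lipschitzness in the full norm, the per-layer certificate from the exact layer-wise constraint with constant $B_q$, summation over layers, $\|\mathbf{g}_t\|^2=\sum_q\|\mathbf{g}_t^q\|^2\le G_{\mathrm{layer}}^2$, and the usual descent-and-telescope step. The only difference is the order of the final conversion, and both orders give the same constant $c$: you sum over layers first and use $\sum_q\|\nabla^q f_i(\boldsymbol{\theta}_t)\|\ge\|\nabla f_i(\boldsymbol{\theta}_t)\|$ before invoking Lemma~\ref{lem:weighted_grad_lower}, whereas the paper applies a blockwise version of that lemma (valid since $\|\nabla^qF(\boldsymbol{\theta}_t)\|\le G_f$, and trivial for zero blocks) and then recombines via $\sum_q\|\nabla^qF(\boldsymbol{\theta}_t)\|^2=\|\nabla F(\boldsymbol{\theta}_t)\|^2$, so the zero-block obstacle you flag is not actually a problem for either route.
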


\begin{proof}
Fix a layer $q$. The proof follows the full-vector argument applied to the coordinates of this block, with $L_q$ replacing the global smoothness constant in the drift term. Define
\begin{equation}
    \mathbf{e}_i^{t,q}
    :=
    \eta_l\sum_{k=0}^{K_i-1}
    \left(
    \nabla^q f_i(\boldsymbol{\theta}_i^{t,k})
    -
    \nabla^q f_i(\boldsymbol{\theta}_t)
    \right).
\end{equation}
As in Lemma~\ref{lem:multi_step_decomp}, we have
\begin{equation}
    \|\boldsymbol{\theta}_i^{t,k}-\boldsymbol{\theta}_t\|
    \le
    \eta_l kG_f
\end{equation}
and the block Lipschitz condition gives
\begin{equation}\begin{aligned}
    \mathbf{g}_i^{t,q}
    &=
    \eta_lK_i\nabla^q f_i(\boldsymbol{\theta}_t)
    +
    \mathbf{e}_i^{t,q},
    \\
    \|\mathbf{e}_i^{t,q}\|
    &\le
    \frac{L_qG_f}{2}\eta_l^2K_i(K_i-1).
\end{aligned}\end{equation}
The exact layer-wise constraint gives, for every client $i$,
\begin{equation}
    \left\langle
    \mathbf{g}_i^{t,q},
    \mathbf{g}_t^q
    \right\rangle
    =
    \rho_i\|\mathbf{g}_i^{t,q}\|.
\end{equation}
Therefore,
\begin{equation}\begin{aligned}
    \eta_lK_i
    \left\langle
    \nabla^q f_i(\boldsymbol{\theta}_t),
    \mathbf{g}_t^q
    \right\rangle
    &=
    \left\langle
    \mathbf{g}_i^{t,q}-\mathbf{e}_i^{t,q},
    \mathbf{g}_t^q
    \right\rangle \\
    &\ge
    \rho_i\|\mathbf{g}_i^{t,q}\|
    -
    \|\mathbf{e}_i^{t,q}\|\,\|\mathbf{g}_t^q\|.
\end{aligned}\end{equation}
The reverse triangle inequality gives
\begin{equation}
    \|\mathbf{g}_i^{t,q}\|
    \ge
    \eta_lK_i\|\nabla^q f_i(\boldsymbol{\theta}_t)\|
    -
    \|\mathbf{e}_i^{t,q}\|.
\end{equation}
Dividing by $\eta_lK_i$, using $\|\mathbf{g}_t^q\|\le G^q$, and substituting the block drift bound gives
\begin{equation}
    \left\langle
    \nabla^q f_i(\boldsymbol{\theta}_t),
    \mathbf{g}_t^q
    \right\rangle
    \ge
    \rho_i\|\nabla^q f_i(\boldsymbol{\theta}_t)\|
    -
    \frac{L_qG_f}{2}(\rho_i+G^q)\eta_l(K_i-1).
\end{equation}
Multiplying by $\rho_i$, summing over clients, using $K_i-1\le K_{\max}-1$, and using $\sum_i\rho_i=1$ yields
\begin{equation}\begin{aligned}
    \left\langle
    \nabla^qF(\boldsymbol{\theta}_t),
    \mathbf{g}_t^q
    \right\rangle
    &\ge
    \sum_{i=1}^N
    \rho_i^2
    \|\nabla^q f_i(\boldsymbol{\theta}_t)\|
    -
    B_q\eta_l(K_{\max}-1) \\
    &\ge
    c
    \left\|
    \nabla^qF(\boldsymbol{\theta}_t)
    \right\|^2
    -
    B_q\eta_l(K_{\max}-1).
\end{aligned}\end{equation}
The second inequality is the block version of Lemma~\ref{lem:weighted_grad_lower}. It is valid because $\|\nabla^qF(\boldsymbol{\theta}_t)\|\le \|\nabla F(\boldsymbol{\theta}_t)\|\le G_f$. Summing over layers gives
\begin{equation}\begin{aligned}
    \left\langle
    \nabla F(\boldsymbol{\theta}_t),
    \mathbf{g}_t
    \right\rangle
    &=
    \sum_{q=1}^Q
    \left\langle
    \nabla^qF(\boldsymbol{\theta}_t),
    \mathbf{g}_t^q
    \right\rangle \\
    &\ge
    c
    \sum_{q=1}^Q
    \left\|
    \nabla^qF(\boldsymbol{\theta}_t)
    \right\|^2
    -
    B_{\mathrm{layer}}\eta_l(K_{\max}-1) \\
    &=
    c\|\nabla F(\boldsymbol{\theta}_t)\|^2
    -
    B_{\mathrm{layer}}\eta_l(K_{\max}-1).
\end{aligned}\end{equation}
The layer-wise update norm satisfies
\begin{equation}\begin{aligned}
    \|\mathbf{g}_t\|^2
    &=
    \sum_{q=1}^Q
    \|\mathbf{g}_t^q\|^2
    \le
    G_{\mathrm{layer}}^2.
\end{aligned}\end{equation}
Applying $L$-smoothness of $F$ to the update $\boldsymbol{\theta}_{t+1} = \boldsymbol{\theta}_t-\eta_g\mathbf{g}_t$, we obtain
\begin{equation}\begin{aligned}
    F(\boldsymbol{\theta}_{t+1})
    &\le
    F(\boldsymbol{\theta}_t)
    -
    \eta_g
    \left\langle
    \nabla F(\boldsymbol{\theta}_t),
    \mathbf{g}_t
    \right\rangle
    +
    \frac{L\eta_g^2}{2}
    \|\mathbf{g}_t\|^2 \\
    &\le
    F(\boldsymbol{\theta}_t)
    -
    c\eta_g
    \|\nabla F(\boldsymbol{\theta}_t)\|^2
    +
    \eta_gB_{\mathrm{layer}}\eta_l(K_{\max}-1)
    +
    \frac{L\eta_g^2}{2}G_{\mathrm{layer}}^2.
\end{aligned}\end{equation}
Rearranging and summing over $t=0,\ldots,T-1$ yields
\begin{equation}\begin{aligned}
    c\eta_g
    \sum_{t=0}^{T-1}
    \|\nabla F(\boldsymbol{\theta}_t)\|^2
    &\le
    \sum_{t=0}^{T-1}
    \bigl[
    F(\boldsymbol{\theta}_t)-F(\boldsymbol{\theta}_{t+1})
    \bigr]
    +
    T\eta_gB_{\mathrm{layer}}\eta_l(K_{\max}-1)
    +
    \frac{L\eta_g^2}{2}TG_{\mathrm{layer}}^2 \\
    &\le
    F(\boldsymbol{\theta}_0)-F^\star
    +
    T\eta_gB_{\mathrm{layer}}\eta_l(K_{\max}-1)
    +
    \frac{L\eta_g^2}{2}TG_{\mathrm{layer}}^2.
\end{aligned}\end{equation}
Dividing by $c\eta_gT$ gives \eqref{eq:layerwise_multi_main_bound}.
\end{proof}

\paragraph{Interpretation.} Layer-wise projection is therefore a direct block-wise generalization of Theorem~\ref{thm:full_multi_step_rate}. The convergence rate keeps the same asymptotic form, while the drift constant becomes a sum of layer-specific terms. When some block-gradient smoothness constants $L_q$ are smaller than the global constant $L$, the resulting $B_q$ terms give a sharper drift bound.

\subsection{Approximate alignment under residual constraints}
\label{sec:approx_proj_extension}

The preceding results use the exact conflict-free alignment constraint $\mathbf{U}_t\mathbf{g}_t=\boldsymbol{\rho}$. In implementation, the update may satisfy this relation only approximately. This can happen when the linear system is solved in a least-squares sense, when the exact constraint is infeasible, or when the final update is modified by practical post-processing such as clipping. The next corollary isolates the effect of such deviations through an explicit residual term. It requires only that the unit-normalized alignment relation hold up to a residual vector. When the projection is exact, this residual is zero and the bound reduces to Theorem~\ref{thm:full_multi_step_rate}. Throughout the statement, the uploaded client updates used to form the unit directions are assumed to be nonzero.

\begin{corollary}[Approximate Alignment under Residual Constraints]
\label{cor:approx_proj_extension}
Consider the full-vector, full-participation setting of Theorem~\ref{thm:full_multi_step_rate} under Assumptions~\ref{ass:full_smooth}--\ref{ass:full_bounded_quantities}, and suppose all uploaded updates are nonzero. Let $\mathbf{U}_t$ denote the unit-normalized matrix. Suppose that, instead of the exact relation $\mathbf{U}_t\mathbf{g}_t=\boldsymbol{\rho}$, the computed update satisfies
\begin{equation}
    \mathbf{U}_t\mathbf{g}_t
    =
    \boldsymbol{\rho}
    +
    \boldsymbol{\delta}_t.
\end{equation}
Define
\begin{equation}
    R_t
    :=
    \left(
    G_f+\frac{L G_f}{2}\eta_l(K_{\max}-1)
    \right)
    \sum_{i=1}^N\rho_i|\delta_{t,i}|.
\end{equation}
Then, for any $T\ge1$,
\begin{equation}
    \label{eq:approx_proj_main_bound}
    \frac{1}{T}
    \sum_{t=0}^{T-1}
    \|\nabla F(\boldsymbol{\theta}_t)\|^2
    \le
    \frac{F(\boldsymbol{\theta}_0)-F^\star}{c\eta_g T}
    +
    \frac{L\eta_g G^2}{2c}
    +
    \frac{B_{\mathrm{drift}}}{c}\eta_l(K_{\max}-1)
    +
    \frac{1}{cT}\sum_{t=0}^{T-1}R_t.
\end{equation}
\end{corollary}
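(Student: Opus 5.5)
We retrace the proof of Lemma~\ref{lem:multi_step_alignment} with the residual carried along. The only change is that the exact alignment identity $\langle \mathbf{g}_i^t,\mathbf{g}_t\rangle=\rho_i\|\mathbf{g}_i^t\|$ becomes
\begin{equation*}
\langle \mathbf{g}_i^t,\mathbf{g}_t\rangle=\|\mathbf{g}_i^t\|\langle \mathbf{u}_i^t,\mathbf{g}_t\rangle=(\rho_i+\delta_{t,i})\|\mathbf{g}_i^t\|\ge \rho_i\|\mathbf{g}_i^t\|-|\delta_{t,i}|\,\|\mathbf{g}_i^t\|,
\end{equation*}
which uses that the updates are nonzero, so $\mathbf{u}_i^t$ is well defined. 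The first term is handled exactly as before. Using Lemma~\ref{lem:multi_step_decomp}, write $\eta_lK_i\langle\nabla f_i^t,\mathbf{g}_t\rangle=\langle\mathbf{g}_i^t,\mathbf{g}_t\rangle-\langle\mathbf{e}_i^t,\mathbf{g}_t\rangle$. Then apply the reverse triangle inequality to $\rho_i\|\mathbf{g}_i^t\|$ and the drift bound \eqref{eq:multi_step_drift_bound}. This reproduces $\rho_i\|\nabla f_i^t\|-\tfrac{LG_f}{2}(\rho_i+G)\eta_l(K_i-1)$ after dividing by $\eta_lK_i$.

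The new residual term needs an \emph{upper} bound on $\|\mathbf{g}_i^t\|/(\eta_lK_i)$. By the decomposition and the triangle inequality,
\begin{equation*}
\frac{\|\mathbf{g}_i^t\|}{\eta_lK_i}\le \|\nabla f_i^t\|+\frac{\|\mathbf{e}_i^t\|}{\eta_lK_i}\le G_f+\frac{LG_f}{2}\eta_l(K_i-1)\le G_f+\frac{LG_f}{2}\eta_l(K_{\max}-1).
\end{equation*}
This gives, per client,
\begin{equation*}
\langle\nabla f_i^t,\mathbf{g}_t\rangle\ge\rho_i\|\nabla f_i^t\|-\frac{LG_f}{2}(\rho_i+G)\eta_l(K_i-1)-\Bigl(G_f+\frac{LG_f}{2}\eta_l(K_{\max}-1)\Bigr)|\delta_{t,i}|.
\end{equation*}
Multiplying by $\rho_i$ and summing over $i$ turns the last term into exactly $R_t$. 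The drift term is bounded by $B_{\mathrm{drift}}\eta_l(K_{\max}-1)$ as in Lemma~\ref{lem:multi_step_alignment}, and Lemma~\ref{lem:weighted_grad_lower} converts $\sum_i\rho_i^2\|\nabla f_i^t\|$ into $c\|\nabla F_t\|^2$. The result is the perturbed certificate
\begin{equation*}
\langle\nabla F_t,\mathbf{g}_t\rangle\ge c\|\nabla F_t\|^2-B_{\mathrm{drift}}\eta_l(K_{\max}-1)-R_t.
\end{equation*}

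The rest follows the proof of Theorem~\ref{thm:full_multi_step_rate}. The smoothness descent inequality with $\|\mathbf{g}_t\|\le G$ picks up the extra term $\eta_gR_t$ at each round. Telescoping over $t=0,\dots,T-1$, bounding $F(\boldsymbol{\theta}_T)\ge F^\star$, and dividing by $c\eta_gT$ gives \eqref{eq:approx_proj_main_bound}, with the residual contributing $\frac{1}{cT}\sum_t R_t$.

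The main obstacle is the sign of $\delta_{t,i}$. The residual cannot be absorbed into $\rho_i$, since $\rho_i+\delta_{t,i}$ may be negative. It must therefore be kept as an additive penalty controlled through the upper bound on $\|\mathbf{g}_i^t\|$, as above, rather than through the lower bound used for the $\rho_i$ term. Everything else is a verbatim reuse of the earlier lemmas.
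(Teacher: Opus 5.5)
Your proposal is correct and follows the paper's proof essentially step for step. It uses the perturbed identity $\langle \mathbf{g}_i^t,\mathbf{g}_t\rangle\ge\rho_i\|\mathbf{g}_i^t\|-|\delta_{t,i}|\,\|\mathbf{g}_i^t\|$, the upper bound $\|\mathbf{g}_i^t\|/(\eta_lK_i)\le G_f+\frac{LG_f}{2}\eta_l(K_{\max}-1)$ for the residual term, and then the same weighting, Lemma~\ref{lem:weighted_grad_lower}, and telescoping as in Theorem~\ref{thm:full_multi_step_rate}, keeping $\delta_{t,i}$ as an additive penalty exactly as the paper does.
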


\begin{proof}
The residual relation gives the perturbed client-alignment identity
\begin{equation}\begin{aligned}
    \langle \mathbf{g}_i^t,\mathbf{g}_t\rangle
    &=
    \|\mathbf{g}_i^t\|
    \langle \mathbf{u}_i^t,\mathbf{g}_t\rangle \\
    &=
    (\rho_i+\delta_{t,i})\|\mathbf{g}_i^t\|
    \\
    &\ge
    \rho_i\|\mathbf{g}_i^t\|
    -
    |\delta_{t,i}|\|\mathbf{g}_i^t\|.
\end{aligned}\end{equation}
Relative to the exact proof, the only new term is the residual contribution. For each client,
\begin{equation}\begin{aligned}
    \frac{\|\mathbf{g}_i^t\|}{\eta_lK_i}
    &\overset{\eqref{eq:multi_step_decomp}}{\le}
    \|\nabla f_i^t\|
    +
    \frac{\|\mathbf{e}_i^t\|}{\eta_lK_i} \\
    &\overset{\eqref{eq:multi_step_drift_bound}}{\le}
    G_f
    +
    \frac{L G_f}{2}\eta_l(K_i-1) \\
    &\le
    G_f
    +
    \frac{L G_f}{2}\eta_l(K_{\max}-1).
\end{aligned}\end{equation}
Combining this upper bound with the same Cauchy--Schwarz and reverse-triangle steps used in Lemma~\ref{lem:multi_step_alignment}, the client-wise lower bound becomes
\begin{equation}\begin{aligned}
    \langle \nabla f_i^t,\mathbf{g}_t\rangle
    \ge&
    \rho_i\|\nabla f_i^t\|
    -
    \frac{L G_f}{2}(\rho_i+G)\eta_l(K_i-1) \\
    &-
    |\delta_{t,i}|
    \left(
    G_f+\frac{L G_f}{2}\eta_l(K_{\max}-1)
    \right),
\end{aligned}\end{equation}
where the first two terms are exactly the drift terms from Lemma~\ref{lem:multi_step_alignment}. Multiplying by $\rho_i$, summing over clients, applying Lemma~\ref{lem:weighted_grad_lower}, and using the definition of $R_t$ gives
\begin{equation}\begin{aligned}
    \langle \nabla F(\boldsymbol{\theta}_t),\mathbf{g}_t\rangle
    &\ge
    c\|\nabla F(\boldsymbol{\theta}_t)\|^2
    -
    B_{\mathrm{drift}}\eta_l(K_{\max}-1)
    -
    R_t.
\end{aligned}\end{equation}
Substituting this inequality into the $L$-smooth descent bound gives
\begin{equation}\begin{aligned}
    F(\boldsymbol{\theta}_{t+1})
    &\le
    F(\boldsymbol{\theta}_t)
    -
    \eta_g
    \langle \nabla F(\boldsymbol{\theta}_t),\mathbf{g}_t\rangle
    +
    \frac{L\eta_g^2}{2}\|\mathbf{g}_t\|^2 \\
    &\le
    F(\boldsymbol{\theta}_t)
    -
    c\eta_g\|\nabla F(\boldsymbol{\theta}_t)\|^2
    +
    \eta_gB_{\mathrm{drift}}\eta_l(K_{\max}-1)
    +
    \eta_gR_t
    +
    \frac{L\eta_g^2}{2}G^2.
\end{aligned}\end{equation}
Rearranging, summing, using $F(\boldsymbol{\theta}_T)\ge F^\star$, and dividing by $c\eta_gT$ gives \eqref{eq:approx_proj_main_bound}.
\end{proof}

\newpage
\section{Additional discussion and design choices}
\label{app:additional_discussion}

\subsection{Comparison with momentum-based and adaptive server optimizers}
\label{app:momentum_comparison}

CRAFT is not a momentum variant or a naive combination of ConFIG~\citep{liu2024config} and momentum methods~\cite{nesterov1983method,polyak1964some,sutskever2013importance}. Momentum-based server optimizers smooth a raw aggregate, for example
\begin{equation}
\mathbf{v}_t
=
\beta \mathbf{v}_{t-1}
+ (1-\beta)\mathbf{a}_t,
\end{equation}
where $\mathbf{a}_t$ is the aggregated current-round update. By contrast, CRAFT solves the reference-anchored projection problem
\begin{equation}
\mathbf{g}_t
=
\arg\min_{\mathbf{g}}
\frac{1}{2}\|\mathbf{g}-\hat{\mathbf{g}}_t\|^2
\quad
\textnormal{s.t.}
\quad
\mathbf{U}_t\mathbf{g}=\boldsymbol{\rho}_t,
\end{equation}
where $\hat{\mathbf{g}}_t$ is the reference direction constructed from the previous global update. The resulting update is the projection of this reference onto the affine set defined by the current conflict-free alignment constraints, not additive smoothing of the averaged update.

This distinction matters because CRAFT depends on the geometry of the active client updates. Two active-client sets can have the same averaged update but different pairwise angles or inner products. Momentum and adaptive server optimizers that operate on the average would produce the same raw aggregate in such cases, whereas CRAFT also enforces the client-wise alignment constraints and can therefore produce different updates. Thus, our contribution reformulates aggregation from zero-prior reconstruction, as in ConFIG, or temporal smoothing, as in FedAvgM methods, into a reference-anchored constrained projection that explicitly avoids conflicts with each active client.

\subsection{Choice of the alignment target \texorpdfstring{$\boldsymbol{\rho}_t$}{rho\_t}}
\label{app:rho_choice}

In our implementation, the alignment target $\boldsymbol{\rho}_t$ in \eqref{eq:linear_constraints}--\eqref{eq:craft_solution} is computed from the dataset sizes of the active clients. At round $t$, for each $i\in\mathcal{S}_t$, we set
\begin{equation}
\rho_i^t
=
\frac{|\mathcal{D}_i|}
{\sum_{j\in\mathcal{S}_t}|\mathcal{D}_j|}.
\end{equation}
The vector $\boldsymbol{\rho}_t=[\rho_i^t]_{i\in\mathcal{S}_t}$ is ordered consistently with the rows of $\mathbf{U}_t$. This choice matches the FL objective \eqref{eq:global_obj}, where clients are weighted by their data proportions. Under partial participation, FedAvg uses the same renormalized dataset-size weights when averaging the updates from the sampled clients. Thus, $\boldsymbol{\rho}_t$ is not an extra heuristic parameter. It is the round-wise counterpart of the data-proportional weights used in federated optimization.

At the same time, $\boldsymbol{\rho}_t$ plays a structural role in CRAFT\@. It specifies the target alignment levels in $\mathbf{U}_t\mathbf{g}_t=\boldsymbol{\rho}_t$. A larger $\rho_i^t$ requires stronger alignment with client $i$'s normalized update direction and therefore gives that client greater influence in the aggregated update. In this sense, CRAFT keeps the same relative weighting principle as FedAvg, but changes the aggregation geometry so that the final update remains positively aligned with every active client.

More generally, different choices of $\boldsymbol{\rho}_t$ encode different aggregation priorities. The default data-size-based target is natural when the goal is consistency with the global FL objective. A uniform $\boldsymbol{\rho}_t$ treats active clients equally regardless of local sample size, which may be preferable under a stronger fairness criterion. Adaptive targets based on client reliability, heterogeneity, training progress, or fairness requirements are also possible and remain an interesting direction for future work.

\subsection{Numerical active set for near-zero updates}
\label{app:numerical_active_set}

The equality constraint $\mathbf{U}_t\mathbf{g}_t=\boldsymbol{\rho}_t$ is intended to enforce positive alignment with informative clients. It should not assign a fixed positive target to a vanishing update. Indeed, for a stabilized row
\begin{equation}
\widetilde{\mathbf{u}}_i^t
=
\frac{\mathbf{g}_i^t}{\|\mathbf{g}_i^t\|+\varepsilon},
\end{equation}
the constraint $\langle \widetilde{\mathbf{u}}_i^t,\mathbf{g}_t\rangle=\rho_i^t>0$ implies
\begin{equation}
\|\mathbf{g}_t\|
\ge
\rho_i^t\frac{\|\mathbf{g}_i^t\|+\varepsilon}{\|\mathbf{g}_i^t\|}.
\end{equation}
Thus, if $\|\mathbf{g}_i^t\|$ is close to zero while the target $\rho_i^t>0$ and the stabilizer $\varepsilon>0$ remain fixed, the factor $\varepsilon/\|\mathbf{g}_i^t\|$ can make the constraint numerically ill conditioned. For this reason, a more stable numerical implementation uses an active set for projection. In the layer-wise version, for each layer $q$, we define
\begin{equation}
\mathcal A_t^q
=
\{i\in\mathcal S_t \mid \|\mathbf{g}_i^{t,q}\|\ge\tau_q\},
\end{equation}
where $\tau_q>0$ is a numerical tolerance, \emph{e.g.}, $\tau_q = 10^{-6}$. The alignment matrix is then built as
\begin{equation}
\mathbf{U}_{t,\mathcal A}^q
=
\bigl[\mathcal{U}(\mathbf{g}_i^{t,q})\bigr]_{i\in\mathcal A_t^q}^{\top}.
\end{equation}
Equivalently, the alignment target is magnitude-gated.
\begin{equation}
\bar\rho_i^{t,q}
=
\begin{cases}
    \displaystyle
    \frac{\rho_i^t}{\sum_{j\in\mathcal A_t^q}\rho_j^t},
    & i\in\mathcal A_t^q,\\
    0,
    & i\notin\mathcal A_t^q,
\end{cases}
\end{equation}
whenever $\mathcal A_t^q\neq\emptyset$. Here $\bar{\boldsymbol{\rho}}_{t,\mathcal A}^q :=[\bar\rho_i^{t,q}]_{i\in\mathcal A_t^q}$ denotes the vector of active entries, indexed in the same order as the rows of $\mathbf{U}_{t,\mathcal A}^q$. CRAFT then solves
\begin{equation}
\mathbf{g}_t^q
=
\hat{\mathbf{g}}_t^q
+
\bigl(\mathbf{U}_{t,\mathcal A}^q\bigr)^\dagger
\bigl(
\bar{\boldsymbol{\rho}}_{t,\mathcal A}^q
-
\mathbf{U}_{t,\mathcal A}^q\hat{\mathbf{g}}_t^q
\bigr).
\end{equation}
If $\mathcal A_t^q=\emptyset$, the projection for that layer is skipped and we use the ordinary weighted average update for that layer, which is already small under the same threshold criterion.

Note that the above active-set rule is not a client-sampling change. At each round, every selected client still performs local training and uploads its update. The rule only decides whether an update is sufficiently large to define a reliable row of the conflict-free alignment constraint. A near-zero update is therefore assigned a zero alignment target for numerical stability, and it automatically re-enters the conflict-free alignment set once its norm exceeds the tolerance.

\subsection{Initialization and early-stage robustness}
\label{app:initialization_robustness}

CRAFT is not sensitive to the initial direction because the reference is used as a soft prior, not as a hard constraint. At $t=0$, we set the initial reference to zero. The first update therefore reduces to the minimum-norm feasible solution with a zero reference, namely the ConFIG-style update. This provides a neutral initialization and avoids injecting arbitrary directional bias at the start of training.

For $t>0$, the previous update serves only as a reference orientation. At each round, it is reprojected onto the conflict-free feasible set defined by the newly sampled active clients. Consequently, even if the early active-client samples are skewed, the induced bias cannot be carried forward unchanged. Any component inherited from the previous reference that is inconsistent with the current client geometry is removed by construction through the projection. CRAFT therefore has an intrinsic self-correction mechanism that mitigates early-stage sampling bias and prevents persistent error propagation across rounds.

\FloatBarrier
\clearpage
\section{Hyperparameter settings}
\label{app:hyperparameters}

This appendix provides the detailed experimental setup, including dataset descriptions, model architectures, scalability settings, baselines, and hyperparameter choices used in Section~\ref{sec:experiments}.

\textbf{Datasets.} We evaluate on three widely used benchmarks, FEMNIST~\citep{caldas2018leaf}, CIFAR-10, and CIFAR-100~\citep{krizhevsky2009learning}. FEMNIST is a naturally non-IID dataset containing handwritten characters from 62 classes. For CIFAR-10 and CIFAR-100, we construct heterogeneous client datasets using Dirichlet partitions \citep{hsu2019measuring} with a concentration parameter $\alpha=0.1$, where smaller $\alpha$ indicates stronger heterogeneity. We manually set the minimum number of samples per client to 20. For each experiment, every client locally splits its assigned data into 80\% training and 20\% test samples. We also provide an illustration of the heterogeneous data distribution in Figure~\ref{fig:data_distribution}.

\textbf{Models.} For FEMNIST, we use a multi-layer perceptron (MLP) with two hidden layers of 200 units each. For CIFAR-10/100, we evaluate both convolutional neural networks (CNNs) and residual networks (ResNets \cite{he2016deep}). The CNN consists of two convolutional layers followed by three fully connected layers, as in \cite{pan2024fedlf}. For ResNets, we use ResNet-20/56/110. Following the setup of \cite{reddi2021adaptive}, we replace batch normalization layers with group normalization layers using a group size of one.

\textbf{Hyperparameters.}
\label{app:hyperparameter_selection}
For all experiments, we perform a grid search of learning rates and report the best results. We search over client learning rates $\eta_l \in \{0.005, 0.01, 0.05, 0.1\}$ and server learning rates $\eta_g \in \{0.01, 0.1, 1\}$. For client training, we apply a per-round learning rate decay of $0.999$. For methods with specific hyperparameters, such as $q$ for q-FedAvg, we use the default values recommended in their original papers. We set the batch size to 50 for MLP and CNN experiments and 20 for ResNet experiments. Each client performs one local epoch per communication round.

\textbf{Baselines.} We compare CRAFT with representative methods covering simple averaging (FedAvg \citep{mcmahan2017communicationefficient}), drift mitigation (FedProx \citep{li2020federated}, FedNova \citep{wang2020fednova}), server momentum/adaptive optimization (FedAvgM \citep{hsu2019measuring}, FedAdam/AdaGrad/Yogi \citep{reddi2021adaptive}), fairness-aware objectives (AFL \citep{mohri2019agnostic}, $q$-FedAvg \citep{li2019fair}, FedMGDA+ \citep{hu2023federatedlearningmeetsmultiobjective}), and conflict-aware aggregation (FedFV \citep{wang2021federatedlearningfairaveraging}, FedLF \citep{pan2024fedlf}, ConFIG \citep{liu2024config}). We omit the history-based adaptation, as explained in Appendix~\ref{app_fedlf_history}.

\textbf{Scalability.} To assess scalability, we use a large-scale setting with $N=1000$ clients for FEMNIST and CIFAR experiments with ResNet models. To also evaluate medium-scale cross-silo settings, we conduct CIFAR experiments with CNNs using $N=100$ clients. At each global round, we sample 10 clients for MLP and CNN experiments, and 100 clients for ResNet experiments.

\begin{figure}[ht]
\centering
\includegraphics[width=0.7\linewidth]{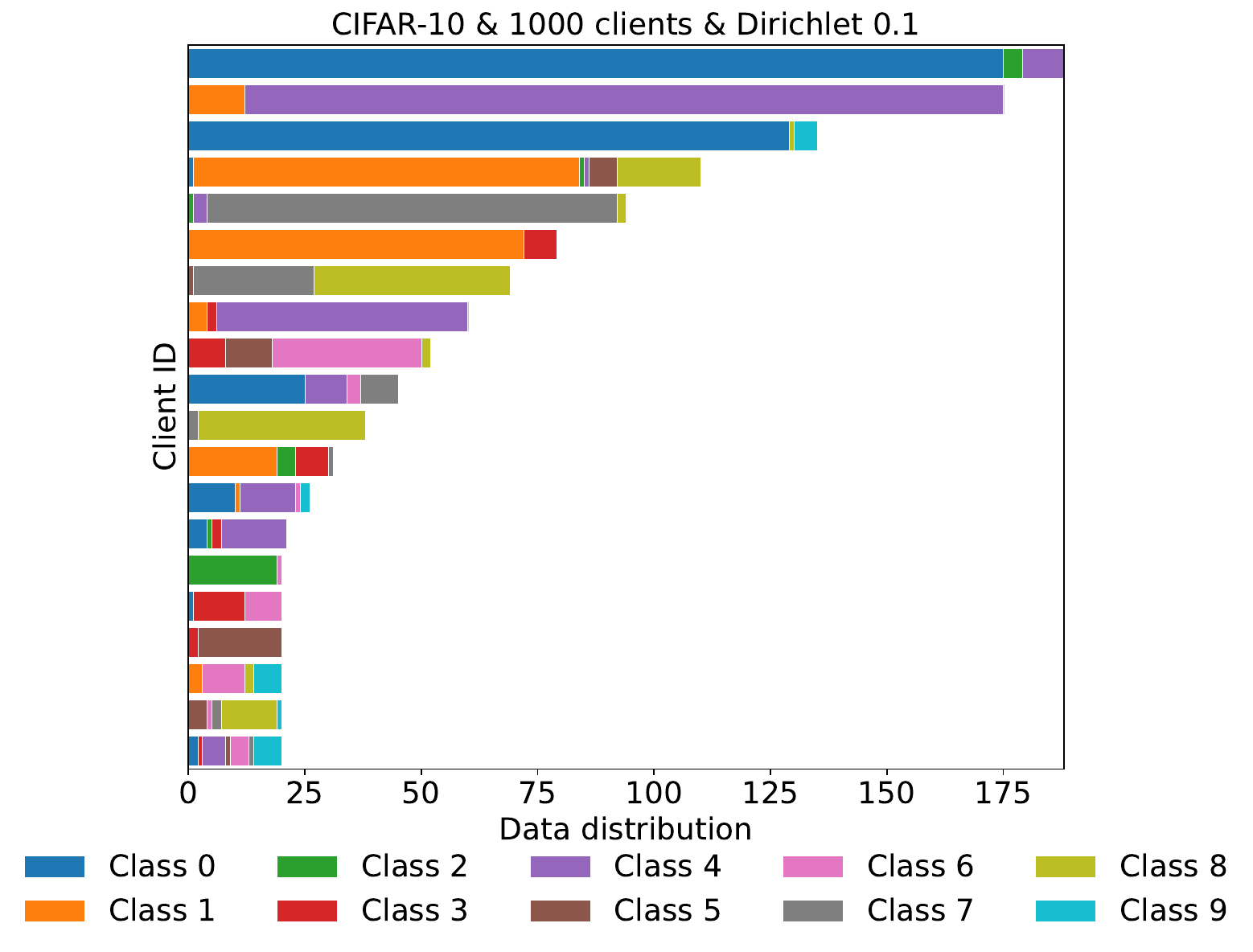}
\caption{Data distribution across clients for CIFAR-10 with 1000 clients under the Dirichlet non-IID setting with $\alpha=0.1$. Each row corresponds to an example client, and the color intensity indicates the proportion of samples from each class. The strong heterogeneity is evident from the concentration of classes within individual clients and from imbalanced data proportions.}
\label{fig:data_distribution}
\end{figure}

\FloatBarrier
\clearpage
\section{Additional results}
\label{app:additional_results}

\subsection{Comparison with history-based conflict mitigation}
\label{app_fedlf_history}

Methods such as FedLF~\citep{pan2024fedlf} mitigate conflicts by explicitly storing and reusing historical client gradients. In contrast, CRAFT avoids maintaining such memory-intensive history buffers. Instead, it carries temporal information through the momentum-guided reference direction $\hat{\mathbf{g}}$, which is reprojected onto the current conflict-free feasible set at each round. Figure~\ref{fig:fedlf_history} shows that CRAFT remains competitive with history-based conflict mitigation while avoiding stale-gradient storage. Therefore, for computational efficiency, we omit the history-based adaptation in all other experiments.

\begin{figure}[ht]
\centering
\includegraphics[width=0.7\linewidth]{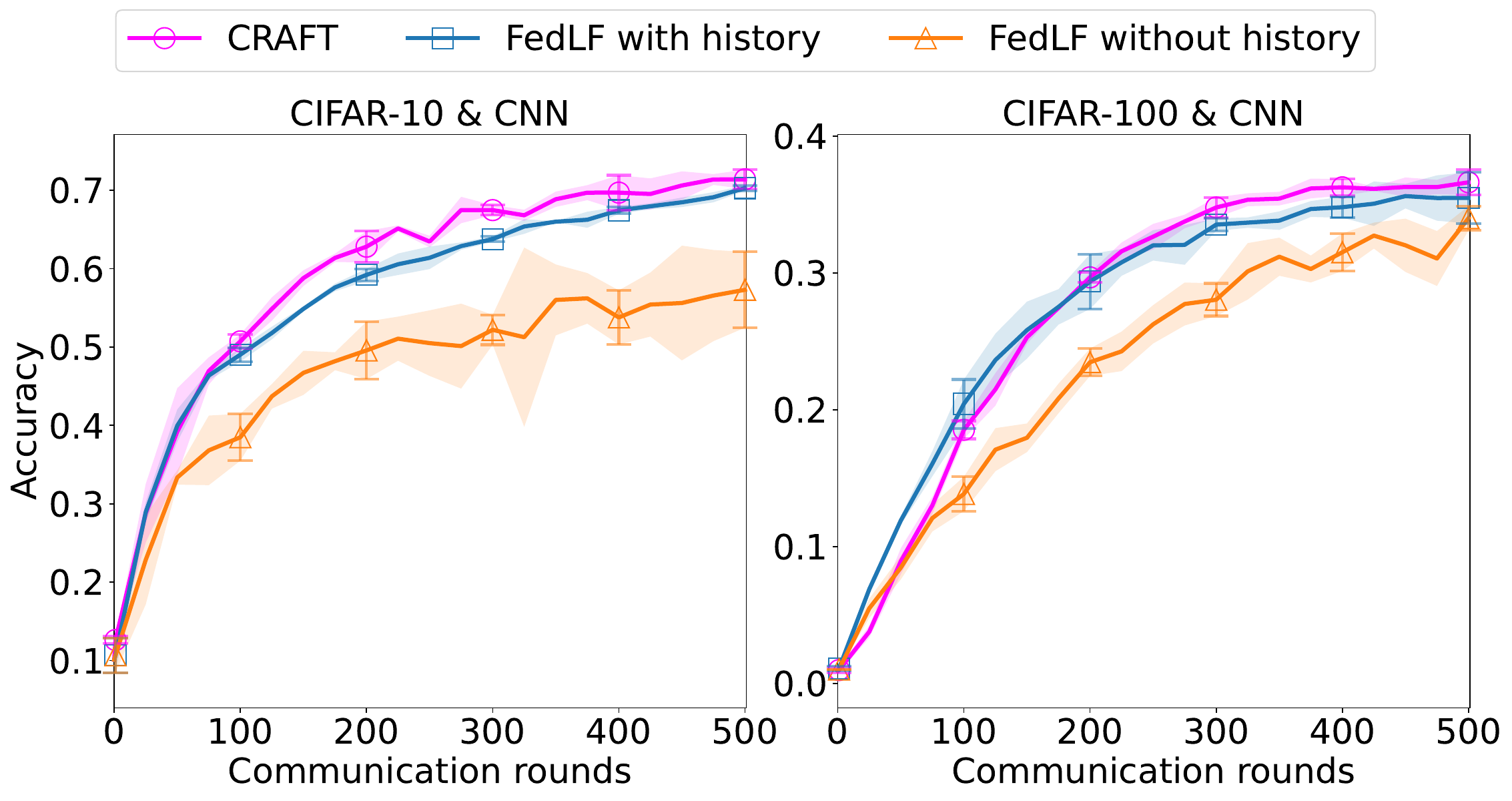}
\caption{Comparison with history-based conflict mitigation methods.}
\label{fig:fedlf_history}
\end{figure}

\subsection{Multiple-seed tests on CIFAR-10 with CNN}
\label{app_multiple_seeds}

We conduct multiple-seed tests to evaluate the stability and robustness of CRAFT compared with baselines. We run the tests on CIFAR-10/100 using CNN models with three random seeds. As shown in Figure~\ref{fig:random_seed_accuracy_evolution}, CRAFT consistently outperforms baselines across all seeds and shows lower variance in the performance metrics. This indicates that CRAFT's improvements are not due to a particular random seed or data split, but reflect a robust enhancement in handling client heterogeneity.

\begin{figure}[ht]
\centering
\includegraphics[width=0.95\linewidth]{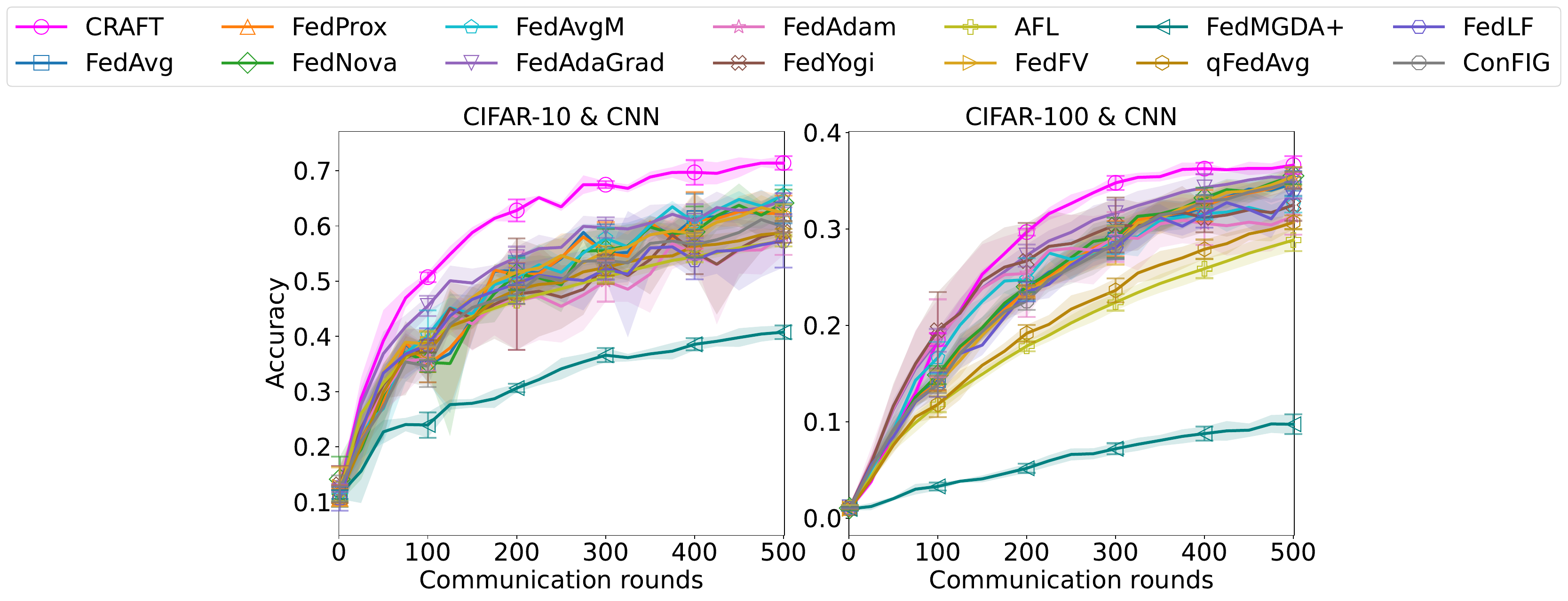}
\caption{Evolution of mean test accuracy over three random seeds. The error bars indicate the standard deviation across seeds. CRAFT consistently performs better.}
\label{fig:random_seed_accuracy_evolution}
\end{figure}

\subsection{Detailed FEMNIST/CIFAR-10/CIFAR-100 results}

We provide more detailed FEMNIST/CIFAR-10/CIFAR-100 results in Table~\ref{tab:main_results_C10_Res}--\ref{tab:main_results_C10_CNN} and Figure~\ref{fig:client_accuracy_distribution_all_methods}. In most settings, CRAFT achieves the strongest overall performance across these metrics.

One exception is the CIFAR-100 setting, where CRAFT attains the highest standard deviation across clients. However, this does not indicate weaker robustness. Note that most baselines produce per-client accuracies that are heavily concentrated near zero. In such cases, a low standard deviation can only reflect \emph{uniformly poor} client performance rather than fairness.

In contrast, CRAFT improves accuracy for a larger fraction of clients, thereby shifting the distribution toward higher accuracies while also increasing its spread. This behavior is also illustrated in the last column of Figure~\ref{fig:client_accuracy_distribution_all_methods}. CRAFT produces a more favorable per-client accuracy distribution, with a clear rightward shift and improved tail performance compared with the baselines.

\begin{figure}[ht]
\centering
\includegraphics[width=0.95\linewidth]{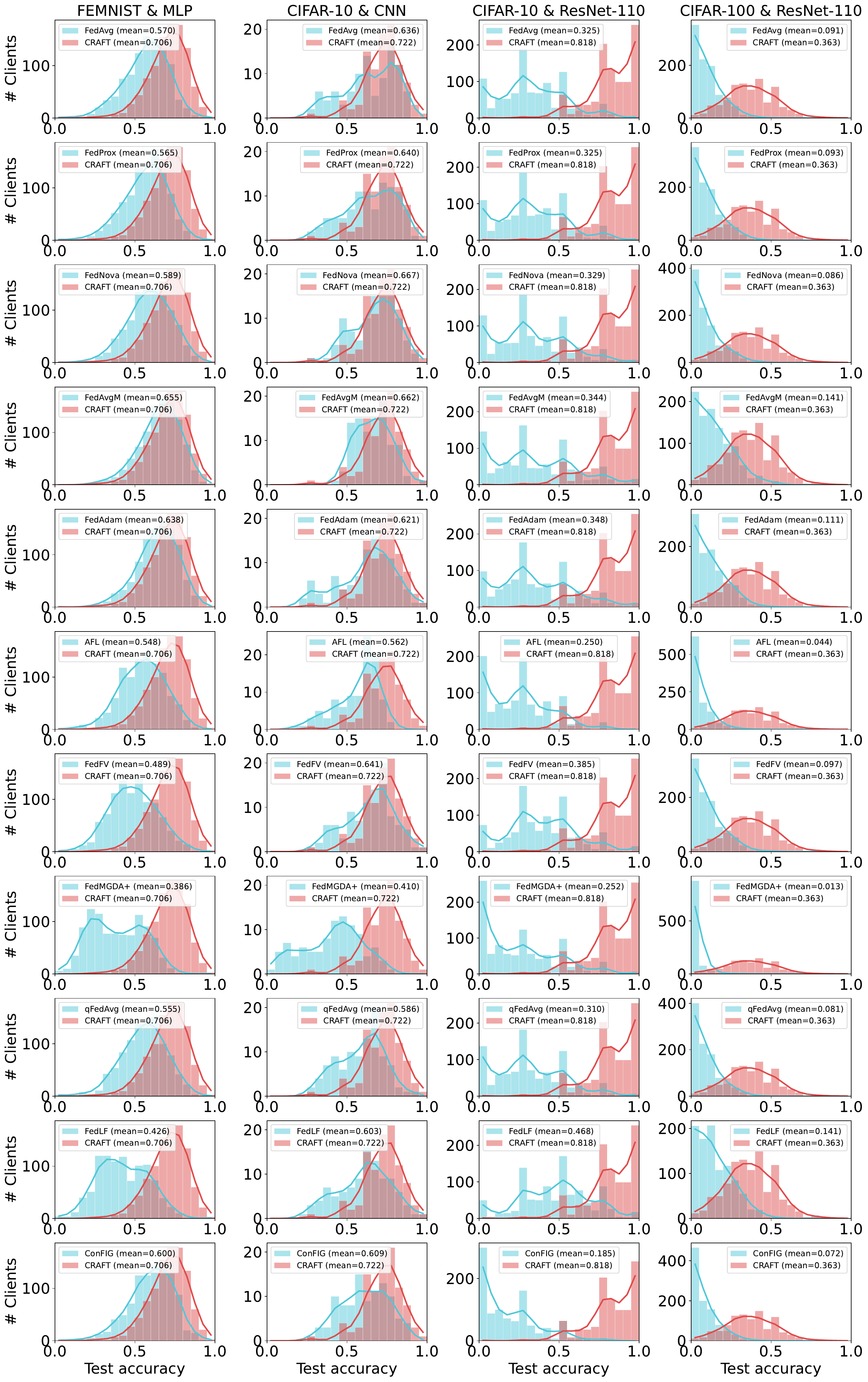}
\caption{Distribution of per-client test accuracy. Each row corresponds to a baseline, and each column corresponds to a test setting. A rightward shift indicates higher accuracy for more clients, and CRAFT produces a more favorable distribution with improved mean and tail performance.}
\label{fig:client_accuracy_distribution_all_methods}
\end{figure}

\begin{table}[htbp]
\centering
\vspace{-5pt}
\caption{Evaluation on CIFAR-10 with different ResNets.}
\label{tab:main_results_C10_Res}
\setlength{\tabcolsep}{1pt}
\resizebox{\textwidth}{!}{
    \begin{tabular}{lcccccccccccc}
        \toprule
        & \multicolumn{4}{c}{CIFAR-10 \& ResNet-20} & \multicolumn{4}{c}{CIFAR-10 \& ResNet-56} & \multicolumn{4}{c}{CIFAR-10 \& ResNet-110} \\
        \cmidrule(lr){2-5} \cmidrule(lr){6-9} \cmidrule(lr){10-13}
        Algorithm & Mean ($\uparrow$) & Best ($\uparrow$) & Worst ($\uparrow$) & Std ($\downarrow$) & Mean ($\uparrow$) & Best ($\uparrow$) & Worst ($\uparrow$) & Std ($\downarrow$) & Mean ($\uparrow$) & Best ($\uparrow$) & Worst ($\uparrow$) & Std ($\downarrow$) \\
        \midrule
        FedAvg & 0.356 & 0.727 & 0.015 & 0.195 & 0.362 & 0.728 & 0.016 & 0.196 & 0.325 & 0.712 & 0.000 & 0.201 \\
        FedProx & 0.356 & 0.728 & 0.015 & 0.195 & 0.363 & 0.721 & 0.012 & 0.197 & 0.325 & 0.713 & 0.000 & 0.203 \\
        FedNova & 0.381 & 0.734 & 0.040 & \underline{0.190} & 0.378 & 0.737 & 0.022 & 0.199 & 0.329 & 0.741 & 0.000 & 0.217 \\
        FedAvgM & 0.463 & 0.871 & 0.040 & 0.241 & 0.399 & 0.826 & 0.009 & 0.243 & 0.344 & 0.811 & 0.000 & 0.244 \\
        FedAdaGrad & 0.427 & 0.799 & 0.072 & 0.206 & 0.401 & 0.792 & 0.026 & 0.215 & 0.361 & 0.767 & 0.005 & 0.212 \\
        FedAdam & 0.412 & 0.790 & 0.065 & 0.197 & 0.376 & 0.760 & 0.020 & 0.204 & 0.348 & 0.785 & 0.001 & 0.224 \\
        FedYogi & 0.398 & 0.777 & 0.047 & 0.201 & 0.368 & 0.771 & 0.010 & 0.214 & 0.336 & 0.760 & 0.000 & 0.220 \\
        AFL & 0.294 & 0.732 & 0.000 & 0.217 & 0.294 & 0.701 & 0.000 & 0.203 & 0.250 & 0.597 & 0.000 & 0.189 \\
        FedFV & 0.421 & 0.780 & 0.078 & 0.194 & 0.416 & 0.793 & 0.076 & 0.199 & 0.385 & 0.779 & 0.029 & 0.208 \\
        FedMGDA+ & 0.235 & 0.612 & 0.000 & 0.193 & 0.249 & 0.693 & 0.000 & 0.219 & 0.252 & 0.693 & 0.000 & 0.224 \\
        qFedAvg & 0.352 & 0.716 & 0.019 & 0.191 & 0.350 & 0.760 & 0.006 & 0.214 & 0.310 & 0.712 & 0.000 & 0.210 \\
        FedLF & 0.501 & 0.845 & \underline{0.149} & 0.208 & \underline{0.494} & \underline{0.831} & \underline{0.132} & 0.206 & \underline{0.468} & \underline{0.822} & \underline{0.082} & 0.215 \\
        ConFIG & \underline{0.525} & \underline{0.872} & 0.146 & 0.211 & 0.317 & 0.694 & 0.000 & \underline{0.196} & 0.185 & 0.556 & 0.000 & \underline{0.177} \\
        \midrule
        \textbf{CRAFT} & \textbf{0.806} & \textbf{1.000} & \textbf{0.479} & \textbf{0.159} & \textbf{0.823} & \textbf{1.000} & \textbf{0.500} & \textbf{0.154} & \textbf{0.818} & \textbf{1.000} & \textbf{0.486} & \textbf{0.158} \\
        \bottomrule
    \end{tabular}
}
\end{table}
\vspace{-4pt}
\begin{table}[htbp]
\centering
\vspace{-4pt}
\caption{Evaluation on CIFAR-100 with different ResNets.}
\label{tab:main_results_C100_Res}
\setlength{\tabcolsep}{1pt}
\resizebox{\textwidth}{!}{
    \begin{tabular}{lcccccccccccc}
        \toprule
        & \multicolumn{4}{c}{CIFAR-100 \& ResNet-20} & \multicolumn{4}{c}{CIFAR-100 \& ResNet-56} & \multicolumn{4}{c}{CIFAR-100 \& ResNet-110} \\
        \cmidrule(lr){2-5} \cmidrule(lr){6-9} \cmidrule(lr){10-13}
        Algorithm & Mean ($\uparrow$) & Best ($\uparrow$) & Worst ($\uparrow$) & Std ($\downarrow$) & Mean ($\uparrow$) & Best ($\uparrow$) & Worst ($\uparrow$) & Std ($\downarrow$) & Mean ($\uparrow$) & Best ($\uparrow$) & Worst ($\uparrow$) & Std ($\downarrow$) \\
        \midrule
        FedAvg & 0.122 & 0.336 & 0.000 & 0.104 & 0.097 & 0.297 & \underline{0.000} & 0.097 & 0.091 & 0.273 & \underline{0.000} & 0.088 \\
        FedProx & 0.122 & 0.336 & 0.000 & 0.104 & 0.097 & 0.298 & \underline{0.000} & 0.097 & 0.093 & 0.281 & \underline{0.000} & 0.090 \\
        FedNova & 0.124 & 0.337 & 0.000 & 0.105 & 0.100 & 0.298 & \underline{0.000} & 0.097 & 0.086 & 0.281 & \underline{0.000} & 0.091 \\
        FedAvgM & \underline{0.213} & \underline{0.462} & \underline{0.008} & 0.130 & 0.166 & 0.398 & \underline{0.000} & 0.117 & \underline{0.141} & \underline{0.372} & \underline{0.000} & 0.114 \\
        FedAdaGrad & 0.178 & 0.433 & 0.000 & 0.127 & 0.132 & 0.352 & \underline{0.000} & 0.108 & 0.121 & 0.328 & \underline{0.000} & 0.103 \\
        FedAdam & 0.174 & 0.415 & 0.000 & 0.121 & 0.123 & 0.335 & \underline{0.000} & 0.105 & 0.111 & 0.314 & \underline{0.000} & 0.101 \\
        FedYogi & 0.172 & 0.408 & 0.000 & 0.120 & 0.116 & 0.327 & \underline{0.000} & 0.104 & 0.078 & 0.265 & \underline{0.000} & 0.087 \\
        AFL & 0.067 & 0.244 & 0.000 & \underline{0.079} & 0.053 & 0.213 & \underline{0.000} & \underline{0.072} & 0.044 & 0.194 & \underline{0.000} & \underline{0.067} \\
        FedFV & 0.139 & 0.352 & 0.000 & 0.110 & 0.120 & 0.329 & \underline{0.000} & 0.104 & 0.097 & 0.295 & \underline{0.000} & 0.095 \\
        FedMGDA+ & 0.029 & 0.164 & 0.000 & \textbf{0.057} & 0.011 & 0.098 & \underline{0.000} & \textbf{0.032} & 0.013 & 0.114 & \underline{0.000} & \textbf{0.040} \\
        qFedAvg & 0.126 & 0.349 & 0.000 & 0.108 & 0.098 & 0.291 & \underline{0.000} & 0.095 & 0.081 & 0.259 & \underline{0.000} & 0.085 \\
        FedLF & 0.206 & 0.443 & 0.000 & 0.127 & 0.149 & 0.383 & \underline{0.000} & 0.113 & 0.141 & 0.360 & \underline{0.000} & 0.110 \\
        ConFIG & 0.208 & 0.458 & 0.000 & 0.131 & \underline{0.181} & \underline{0.418} & \underline{0.000} & 0.121 & 0.072 & 0.250 & \underline{0.000} & 0.084 \\
        \midrule
        \textbf{CRAFT} & \textbf{0.398} & \textbf{0.673} & \textbf{0.137} & 0.154 & \textbf{0.381} & \textbf{0.661} & \textbf{0.112} & 0.156 & \textbf{0.363} & \textbf{0.629} & \textbf{0.110} & 0.149 \\
        \bottomrule
    \end{tabular}
}
\end{table}
\vspace{-4pt}
\begin{table}[htbp]
\centering
\vspace{-4pt}
\caption{Evaluation on FEMNIST and CIFAR-10 with different networks.}
\label{tab:main_results_C10_CNN}
\setlength{\tabcolsep}{1pt}
\resizebox{\textwidth}{!}{
    \begin{tabular}{lcccccccccccc}
        \toprule
        & \multicolumn{4}{c}{FEMNIST \& MLP} & \multicolumn{4}{c}{CIFAR-10 \& CNN} & \multicolumn{4}{c}{CIFAR-100 \& CNN} \\
        \cmidrule(lr){2-5} \cmidrule(lr){6-9} \cmidrule(lr){10-13}
        Algorithm & Mean ($\uparrow$) & Best ($\uparrow$) & Worst ($\uparrow$) & Std ($\downarrow$) & Mean ($\uparrow$) & Best ($\uparrow$) & Worst ($\uparrow$) & Std ($\downarrow$) & Mean ($\uparrow$) & Best ($\uparrow$) & Worst ($\uparrow$) & Std ($\downarrow$) \\
        \midrule
        FedAvg & 0.570 & 0.776 & 0.305 & 0.136 & 0.636 & 0.856 & 0.318 & 0.167 & 0.335 & 0.474 & 0.209 & 0.079 \\
        FedProx & 0.565 & 0.770 & 0.300 & 0.135 & 0.640 & 0.867 & 0.335 & 0.162 & 0.342 & \underline{0.485} & 0.223 & 0.079 \\
        FedNova & 0.589 & 0.804 & 0.337 & 0.135 & \underline{0.667} & 0.867 & 0.413 & 0.138 & 0.345 & \textbf{0.490} & 0.229 & 0.078 \\
        FedAvgM & \underline{0.655} & \underline{0.843} & \underline{0.408} & 0.125 & 0.662 & 0.846 & \underline{0.491} & \textbf{0.108} & 0.319 & 0.446 & 0.201 & 0.070 \\
        FedAdaGrad & 0.645 & 0.828 & 0.405 & \underline{0.121} & 0.648 & 0.850 & 0.472 & 0.116 & 0.340 & 0.464 & 0.236 & 0.068 \\
        FedAdam & 0.638 & 0.825 & 0.390 & 0.125 & 0.621 & 0.872 & 0.272 & 0.173 & 0.314 & 0.456 & 0.207 & 0.069 \\
        FedYogi & 0.639 & 0.830 & 0.381 & 0.130 & 0.607 & 0.865 & 0.345 & 0.152 & 0.333 & 0.453 & \underline{0.237} & 0.063 \\
        AFL & 0.548 & 0.774 & 0.301 & 0.137 & 0.562 & 0.733 & 0.285 & 0.134 & 0.276 & 0.394 & 0.172 & \underline{0.062} \\
        FedFV & 0.489 & 0.742 & 0.246 & 0.144 & 0.641 & \underline{0.875} & 0.361 & 0.150 & \underline{0.346} & 0.474 & 0.225 & 0.074 \\
        FedMGDA+ & 0.386 & 0.658 & 0.133 & 0.166 & 0.410 & 0.690 & 0.083 & 0.182 & 0.090 & 0.197 & 0.017 & \textbf{0.051} \\
        qFedAvg & 0.555 & 0.774 & 0.298 & 0.137 & 0.586 & 0.808 & 0.336 & 0.143 & 0.299 & 0.420 & 0.197 & 0.065 \\
        FedLF & 0.426 & 0.688 & 0.177 & 0.154 & 0.603 & 0.872 & 0.303 & 0.165 & 0.332 & 0.482 & 0.212 & 0.075 \\
        ConFIG & 0.600 & 0.801 & 0.333 & 0.136 & 0.609 & 0.834 & 0.362 & 0.142 & 0.341 & 0.472 & 0.218 & 0.077 \\
        \midrule
        \textbf{CRAFT} & \textbf{0.706} & \textbf{0.878} & \textbf{0.474} & \textbf{0.115} & \textbf{0.722} & \textbf{0.896} & \textbf{0.491} & \underline{0.116} & \textbf{0.359} & 0.484 & \textbf{0.255} & 0.069 \\
        \bottomrule
    \end{tabular}
}
\end{table}

\newpage
\end{document}